\documentclass[11pt]{article}
\usepackage[margin=1in]{geometry}
\usepackage{amsmath,amssymb,amsthm}
\usepackage{graphicx}
\usepackage{booktabs}
\usepackage{multirow}
\usepackage[numbers,sort&compress]{natbib}
\usepackage[colorlinks=true,linkcolor=blue,citecolor=blue,urlcolor=blue]{hyperref}
\usepackage{microtype}

\graphicspath{{../figures/}{figures/}}

\newtheorem{theorem}{Theorem}
\newtheorem{proposition}{Proposition}
\newtheorem{lemma}{Lemma}
\newtheorem{conjecture}{Conjecture}
\newtheorem{assumption}{Assumption}
\newtheorem{remark}{Remark}

\newcommand{\bmu}{\boldsymbol{\mu}}

\title{Neural Collapse Is Forbidden:\\ Information Floors in Language Models}

\author{Bruno Abrahao\\ NYU Shanghai\\ Leonard N. Stern School of Business, New York University\\ \texttt{bd58@nyu.edu}}
\date{July 2026}
\begin{document}
\maketitle
\begin{abstract}
Within-class variance in language-model representations is commonly read as incomplete
neural collapse. We argue it is allocated information storage, and that the allocation
obeys a law. A one-line centering identity voids a family of simplex
equiangular-tight-frame claims, including our own earlier ones; in dimensionless
variance shares across 14 models, macro-category structure carries only 4--12\% of
representational variance and within-token context carries 79--91\%, stable across a
100x parameter range. On the theory side, token-level weight decay penalizes a category
in proportion to its type count, not its occurrence mass, reducing next-token prediction
to an imbalanced $K$-class problem whose optimum orders category norms by type count. A
converse floor, proved for binary categories, forces within-category dispersion to be at
least proportional to the conditional mutual information
$I(\text{token};\text{context}\mid\text{category})$. The law holds: identity dispersion,
not total variance, tracks this information across every tested model and partition,
under a model-free estimate and even across models, where one model's information
predicts another's dispersion; and over pretraining the category share overshoots,
decays, and partially recovers, because the information it must carry never left.
\end{abstract}

\section{Introduction}
\label{sec:v2intro}

In image classification, the terminal phase of training is when geometry tightens:
within-class variability collapses and class means settle into a simplex equiangular
tight frame, the neural collapse (NC) of \citet{papyan2020prevalence}. Language models
invert this script. Measuring the geometry of hidden states over public pretraining
checkpoints, we find that category-level structure crystallizes almost immediately,
overshoots, then partially dissolves over the long remainder of training, and finally,
in most model sizes, partially returns, all while perplexity improves monotonically. The
terminal phase of a language model does not finish collapsing its representations; it
reallocates them. Figure~\ref{fig:v2hook} previews the three findings this inversion
opens.

\begin{figure*}[t]
\centering
\includegraphics[width=\textwidth]{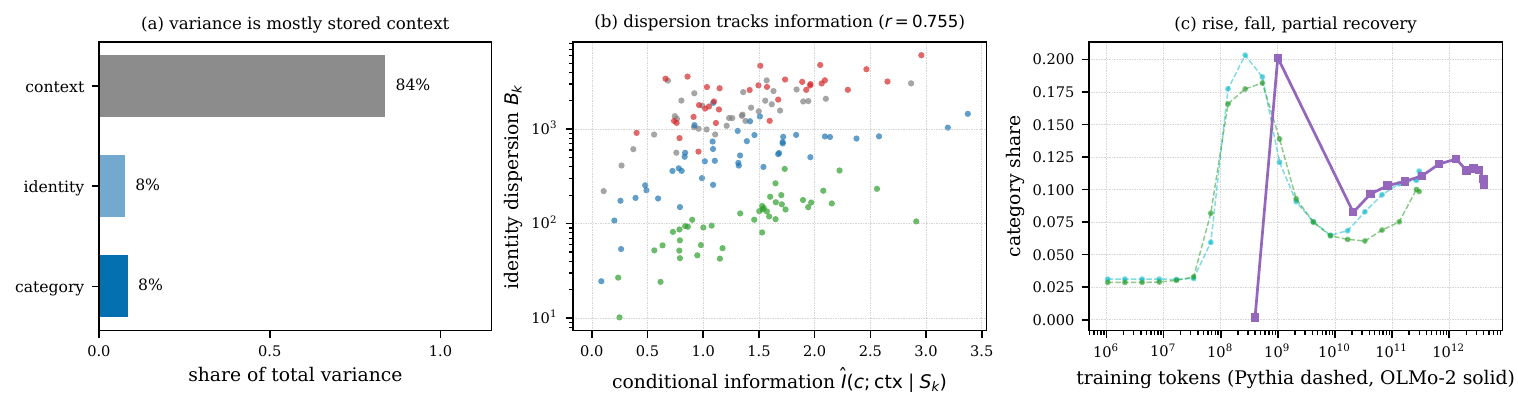}
\caption{\textbf{(a)}~Across 13 models, variance is dominated
by within-token context; macro-category structure is a thin slice, stable across a
100x parameter range. \textbf{(b)}~Within-category identity dispersion tracks
model-realized conditional information $\hat I(c; \mathrm{ctx} \mid S_k)$ (pooled
partial $r = 0.755$), the dispersion channel the information floor of
Section~\ref{sec:v2theory} makes necessary.
\textbf{(c)}~Over pretraining the between-category share overshoots, decays, and
partially recovers, in Pythia (dashed) and OLMo-2 (solid) on a shared token axis.}
\label{fig:v2hook}
\end{figure*}

We argue that the resolution of this apparent paradox is informational rather than
geometric, and that it reframes what within-class variance in language models is.
The prevailing reading, inherited from classification, treats residual within-class
variability as unfinished collapse: noise that more scale or more training should
remove. On measurement grounds alone this reading fails. Using a dimensionless
three-level decomposition of representational variance (within-token context
variability, token identity within category, and category structure), we find across 14
models from three families (GPT-2, Pythia, Qwen2.5; 70M--6.9B parameters) that
within-token context variability carries 79--91\% of total variance (69\% in one
degenerate model), token identity 4--13\%, and macro-category structure only 4--12\%
(28\% in that same degenerate model), and that this allocation is roughly
stable across a 100x range of parameters and across coverage regimes from 500 to 26{,}000
token types. The allocation looks set by the data, not by capacity. Meanwhile roughly
60\% of the models' cross-entropy loss is resolved \emph{within} categories at $K{=}10$,
so the variance-dominant component is also the loss-dominant one.

Our account is that this variance is where the model stores context, and that the
allocation obeys a law with a proved floor. On the theory side
(Section~\ref{sec:v2theory}) we make three moves. First, a one-line centering identity
shows that the mean pairwise cosine of any centered, near-equinorm configuration is
pinned near $-1/(K{-}1)$, which voids a family of simplex-ETF measurements, including
ones we ourselves reported in an earlier version of this work.

Second, an exact
decomposition shows that token-level weight decay does not act uniformly on category
structure. It penalizes a category's mean head row in proportion to the category's
\emph{type count} (the number of distinct tokens it contains), while the category's
occurrence \emph{mass} (its total frequency) enters only through the loss. Under a
context-free within-category readout, next-token prediction then reduces to an
imbalanced $K$-class problem, exactly when the within-category conditionals are uniform
and up to an explicit coupling term in general. Solving that problem at $K{=}2$ shows
its norms are ordered by type count, through the induced decay and offsets, not by
mass, so the optimum is a distorted frame rather than a regular simplex; for general
$K$ we obtain the optimum in closed form (an oblique projection reducing to SELI's
centering matrix at equal decay), with a complete proof, and test it empirically.

Third, a converse floor: if
within-category choice depends on context, features cannot collapse; we prove, in the
binary case, that dispersion along the readout directions is at least proportional to
the conditional mutual information $I(\mathrm{token};\mathrm{context}\mid\mathrm{category})$.

The floor bounds total within-category dispersion, and so establishes that
context-dependent choice requires dispersion without saying which component carries it.
We therefore test the components separately, and find the requirement localizes: the
token-identity dispersion of class means within a category tracks conditional
information (Section~\ref{sec:v2law}), while total variance does not, consistent with a
static within-category preference being expressible by head rows while a
context-dependent one is not.

Estimating the model-realized conditional information per category from a single
inference pass, we find that identity dispersion tracks it in 16 of 16 model-partition
combinations (four models, four partitions), with Spearman correlations of 0.58--0.85
wherever the partition supplies enough categories, and a pooled within-model-rank
partial correlation of $r{=}0.755$ ($p{=}3{\times}10^{-30}$) after controlling for
marginal entropy, category mass, and type count; repeating the estimator over the full
24k-type covered vocabulary on the 50{,}000-sequence train corpus reproduces the law in 12 of 12
further combinations (pooled partial $r{=}0.773$, $n{=}200$ categories). Total variance shows nothing (mixed
signs near zero), and identity dispersion tracks conditional information about twice as
strongly as it tracks marginal entropy: the negative controls behave.

Section~\ref{sec:v2frame}
tests the frame side of the reduction: centered category-centroid norms decrease in 14
of 14 models, in two separable channels; the type-count channel is the one our $K{=}2$
solution predicts at first order, while the mass channel is largely inherited from a
token-level frequency-norm relation and is controlled with frequency-matched nulls.
The same asymmetry holds when the prediction is read directly off the models'
unembedding rows, in 24 of 24 model-regime-partition cells.

Section~\ref{sec:v2dynamics} returns to dynamics with the same dimensionless
instruments. On Pythia checkpoints the between-category share rises abruptly at steps
32--64, peaks at 0.15--0.21 within the first thousand steps, decays to 0.05--0.07 by
mid-training, and then recovers by a factor of 1.5--1.8 in four of six sizes; the
within-token share correspondingly dips and re-expands, a pattern consistent with,
though not by itself demonstrating, a growing storage cost of context-dependent choice.
We repeat the protocol on OLMo-2 pretraining checkpoints under pre-registered criteria:
three of four pass, and the trajectory's minimum lands at the same token count as
Pythia's. Transient semantic geometry in next-token prediction has
recently been reported in synthetic settings as a monotone dissolution into symmetry
\citep{zhao2026structure}, and non-monotonic spectral geometry has been traced on real
checkpoints without category structure or a collapse lens \citep{li2025tracing}; the
decomposition we use shows where the non-monotonicity lives, and that on real models the
story ends differently: category structure partially returns, because the information it
must carry never left.

\paragraph{Contributions.}
\begin{enumerate}
\item \textbf{Measurement.} A centering identity that retires mean-cosine ETF evidence;
a dimensionless three-level variance decomposition; the allocation picture across 14
models and two coverage regimes; and a pitfalls appendix in which covariance-matched
Gaussian nulls cut the k-means-versus-shuffle separations that magnitude claims (ours
included) have rested on down to an excess factor of 1.1--1.75 (a lower bound, the null
being deliberately strict, but a factor rather than orders of magnitude).
\item \textbf{Theory.} An exact reduction of next-token prediction with token-level
weight decay to a size-weighted, offset, imbalanced $K$-class problem
(Theorem~\ref{thm:reduction}); the induced double weighting (mass in the risk, type
count in the decay and offsets), with the $K{=}2$ case solved exactly and showing that
decay, not mass, orders the norms (Proposition~\ref{prop:k2}); and a proved
information floor on within-category dispersion (Proposition~\ref{prop:floor}), with
the general geometry obtained in closed form and fully proved
(Theorem~\ref{conj:seli}).
\item \textbf{An empirical law.} Within-category identity dispersion tracks
model-realized conditional information, in the theoretically discriminating pattern, in
every model and partition tested.
\item \textbf{Dynamics.} On public pretraining checkpoints of two families (Pythia,
and OLMo-2 under pre-registered criteria), category structure crystallizes,
overshoots, decays, and partially recovers, with the post-overshoot minimum aligned
across families on the token axis; the pattern is invisible to mean-cosine metrics,
distinct from global spectral accounts, and contradicts the extrapolation from
synthetic settings that semantic structure monotonically dissolves.
\end{enumerate}

\paragraph{What is proved, what is measured, what is conjectured.}
We tag every claim inline. Every theory claim carries a complete proof; the general-$K$
geometry's final sign condition, numerically certified in an earlier draft, is now
proved in closed form (Appendix~\ref{app:twisted}), and existing imbalanced
unconstrained-features-model (UFM) theorems, which cover neither per-class decay nor
offsets, are not relied on for it. The scope notes worth stating up front: the
general-$K$ floor remains a conjecture (the binary case is proved), and the dynamics
onset at steps 32--64 is resolvable only on
Pythia, whose public checkpoint grid is dense enough; OLMo-2's grid is not. An appendix
audits, by name, the claims of our own earlier version that the present instruments
retire.

\section{Measuring variance allocation without artifacts}
\label{sec:v2measurement}

Before we can ask whether residual within-class variance is unfinished collapse or
stored information, we need an allocation measurement that survives its own artifacts.
This section builds one and reports what it finds; Appendix~\ref{app:v2pitfalls}
records the metrics that did not survive.

\paragraph{Extraction protocol.}
For each model we compute per-token class statistics on WikiText-103
\citep{merity2017pointer}: contexts are packed into 512-token sequences, and for every
vocabulary item $c$ occurring as a next token at least 50 times we record the mean
$\bmu_c$, per-dimension variance, and count $n_c$ of the hidden states at layer $L{-}1$
over positions whose next token is $c$. On the validation split this yields roughly
500--560 covered types per model (the tokenizer-dependent frequent-token subset); on the
train split (50{,}000 sequences) coverage rises to 24{,}000--26{,}000 types, placing all
models squarely in the overcomplete regime ($C/d \in [12, 34]$). We analyze 14
pretrained models across GPT-2 (124M--1.5B), Pythia (70M--6.9B)
\citep{biderman2023pythia}, and Qwen2.5 (0.5B--3B). Categories are formed by $k$-means
on the class means ($K{=}10$, fixed seed) as the illustrative partition; nothing depends
on this choice, and we sweep $K \in \{10, 20, 50\}$ and an exogenous universal
part-of-speech partition \citep{petrov2012universal} throughout. Clusters are seed-stable
(mean adjusted Rand index 0.76 across restarts) and align partially with coarse
part-of-speech structure ($V$-measure 0.37).

\paragraph{The three-level decomposition.}
All headline quantities are dimensionless shares from the exact law of total variance.
With counts $n_c$ as weights, category assignments $k(c)$, mass-weighted category means
$\mathbf{m}_k$ and grand mean $\bmu_G$, total variance splits exactly into
\[
\underbrace{\textstyle\sum_c n_c \sigma_c^2}_{\text{within token}}
\;+\;
\underbrace{\textstyle\sum_c n_c \|\bmu_c - \mathbf{m}_{k(c)}\|^2}_{\text{token identity within category}}
\;+\;
\underbrace{\textstyle\sum_k N_k \|\mathbf{m}_k - \bmu_G\|^2}_{\text{between category}},
\]
where $\sigma_c^2$ is the summed per-dimension within-token variance and $N_k$ the
category mass. Shares are each term over the total. Because shares are unitless they are
comparable across embedding dimensions, model scales, checkpoints, and coverage regimes;
Appendix~\ref{app:v2pitfalls} documents how dimensional metrics (the $\alpha{=}2$
variant of class-distance normalized variance, CDNV) manufactured a spurious
two-orders-of-magnitude scaling trend in an earlier
version of this work.

\begin{table}[t]
\centering
\small
\caption{Variance allocation at $L{-}1$ (count-weighted shares; $K{=}10$ illustrative
partition). CDNV is the standard ($\alpha{=}0$) cluster-level value, shown for
comparability with the NC literature. Bottom block: train-split extractions in the
overcomplete regime. GPT-2 XL is a known degenerate case (near-collinear centroids).}
\label{tab:v2allocation}
\begin{tabular}{lrrrrr}
\toprule
\textbf{Model} & \textbf{Types} & \textbf{Within-token} & \textbf{Identity} & \textbf{Between-cat.} & \textbf{CDNV}$_{\alpha=0}$ \\
\midrule
Pythia-70M   & 559 & 0.825 & 0.088 & 0.087 & 4.62 \\
Pythia-160M  & 559 & 0.797 & 0.090 & 0.113 & 4.57 \\
Pythia-410M  & 559 & 0.840 & 0.064 & 0.096 & 4.58 \\
Pythia-1B    & 559 & 0.914 & 0.039 & 0.047 & 9.41 \\
Pythia-1.4B  & 559 & 0.857 & 0.062 & 0.082 & 5.12 \\
Pythia-2.8B  & 559 & 0.834 & 0.078 & 0.088 & 5.05 \\
Pythia-6.9B  & 559 & 0.885 & 0.051 & 0.064 & 8.05 \\
GPT-2 Small  & 525 & 0.905 & 0.051 & 0.043 & 9.91 \\
GPT-2 Medium & 525 & 0.788 & 0.092 & 0.120 & 3.31 \\
GPT-2 Large  & 525 & 0.808 & 0.098 & 0.094 & 4.68 \\
GPT-2 XL     & 525 & 0.688 & 0.036 & 0.276 & 2.55 \\
Qwen2.5-0.5B & 503 & 0.798 & 0.105 & 0.098 & 4.29 \\
Qwen2.5-1.5B & 503 & 0.802 & 0.099 & 0.099 & 4.15 \\
Qwen2.5-3B   & 503 & 0.847 & 0.079 & 0.074 & 5.08 \\
\midrule
Pythia-70M (train)   & 24\,403 & 0.829 & 0.100 & 0.071 & 6.13 \\
Pythia-410M (train)  & 24\,403 & 0.835 & 0.081 & 0.084 & 6.96 \\
Pythia-1B (train)    & 24\,403 & 0.907 & 0.049 & 0.045 & 12.4 \\
Pythia-1.4B (train)  & 24\,403 & 0.845 & 0.073 & 0.083 & 7.58 \\
Pythia-6.9B (train)  & 24\,403 & 0.874 & 0.062 & 0.064 & 10.2 \\
GPT-2 Small (train)  & 26\,413 & 0.907 & 0.062 & 0.031 & 12.7 \\
GPT-2 XL (train)     & 26\,413 & 0.675 & 0.045 & 0.281 & 4.95 \\
Qwen2.5-1.5B (train) & 25\,266 & 0.800 & 0.125 & 0.075 & 6.77 \\
\bottomrule
\end{tabular}
\end{table}

\begin{figure}[t]
\centering
\includegraphics[width=\textwidth]{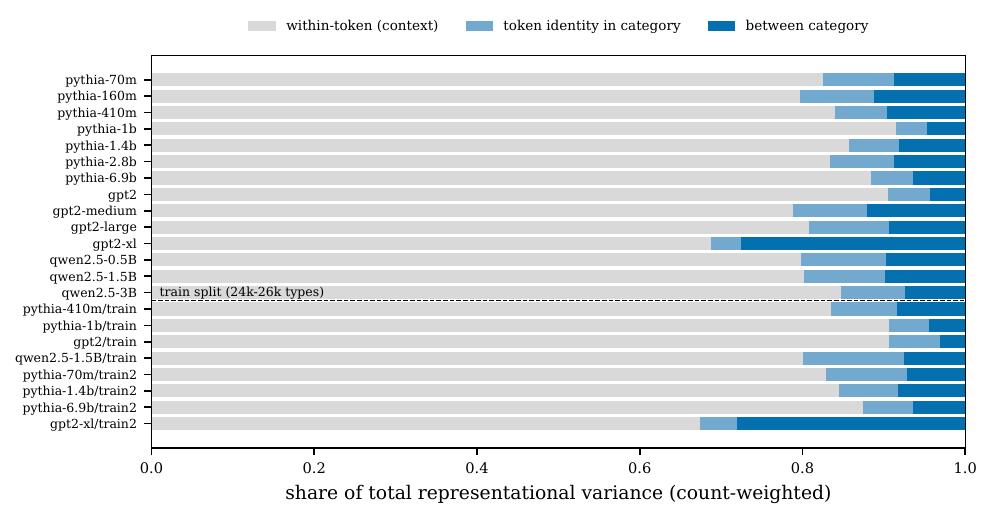}
\caption{The allocation of representational variance (count-weighted shares, $K{=}10$
illustrative partition) across 14 models at validation coverage and 8 train-split
extractions at 24k--26k types. Within-token context variability dominates everywhere;
macro-category structure is a thin slice; the picture is stable across families, a 100x
parameter range, and a 45x change in covered types.}
\label{fig:v2allocation}
\end{figure}

\paragraph{The allocation picture.}
Table~\ref{tab:v2allocation} and Figure~\ref{fig:v2allocation} show the result. Within-token context variability
dominates everywhere: 79--91\% of total variance on the validation subset outside
GPT-2 XL (0.69 there), and the same band at 24k--26k types. Token identity within
category carries 4--13\%, and macro
category structure 4.3--12\% (28\% for GPT-2 XL, whose centroid degeneracy we flag
throughout). Three observations. First, the allocation is nearly invariant to a 45x
increase in the number of covered types, so it is not an artifact of the frequent-token
subset. Second, there is no consistent scale trend within any family: the
between-category share moves with $\log$ parameters at Spearman $\rho{=}-0.46$
($p{=}0.29$) in Pythia, $+0.80$ in GPT-2, and $-0.50$ in Qwen2.5, and the standard
$\alpha{=}0$ CDNV is likewise non-monotone (Pythia $\rho{=}+0.64$). Across a 100x
parameter range the allocation stays in a narrow band: whatever sets it, it is not
capacity. Third, the same protocol applied over training checkpoints
(Section~\ref{sec:v2dynamics}) shows the allocation is far from static in time, which is
precisely why the cross-sectional stability at the end of training calls for an
explanation. The theory of Section~\ref{sec:v2theory} supplies a candidate: the
allocation tracks the information structure of the prediction task, whose decomposition
across levels is a property of the corpus, not of the model.

\paragraph{Which controls survive.}
Two standard positive controls behave: untrained checkpoints show an order of magnitude
less category structure than trained ones, and exogenous part-of-speech categories
reproduce the qualitative allocation without any clustering step. Two familiar
magnitude claims do not survive. The k-means-versus-shuffled-partition separations that
we and others have quoted (hundreds-fold under dimensional metrics) are reproduced at
matched to within a factor of 1.1--1.75 by covariance-matched Gaussian point clouds
with no linguistic structure whatsoever; that null is strict by construction (it
inherits the variance directions the categories themselves generate), so the excess is
a lower bound on learned structure, but it remains a factor, not orders of magnitude.
And the exogenous part-of-speech scaling control that our earlier version leaned on is
itself an artifact of the $\alpha{=}2$ metric: under $\alpha{=}0$ the POS-grouped trend
is flat to reversed (Pythia $\rho{=}+0.46$, Qwen2.5 $\rho{=}+1.0$).
Appendix~\ref{app:v2pitfalls} gives the full accounting. The constructive consequence:
claims in this paper rest on dimensionless shares, explicit nulls, and cross-model sign
consistency, not on ratio magnitudes.

\section{Theory: what geometry next-token prediction allows}
\label{sec:v2theory}

Three results follow from one decomposition: mean-cosine ETF evidence is an accounting
identity; token-level weight decay turns next-token prediction into an imbalanced
category problem whose norms are ordered by type count; and context-dependent
within-category choice forces a floor on feature dispersion. We take them in that
order.

\paragraph{Setup.}
A vocabulary $V$ with $|V|=C$ is partitioned into categories $\{S_k\}_{k=1}^K$ with type
counts $|S_k|$ and occurrence masses $N_k$ ($N=\sum_k N_k$, priors $\pi_k = N_k/N$). The
model assigns logits $z_c = \mathbf{w}_c^\top \mathbf{h}$ with head rows
$\mathbf{w}_c \in \mathbb{R}^d$ and context features $\mathbf{h}_i \in \mathbb{R}^d$ for
contexts $i=1,\dots,N$ with next token $y_i$. The regularized objective in the
unconstrained features model (UFM) is
\[
L \;=\; -\frac{1}{N}\sum_{i=1}^N \log \frac{e^{\mathbf{w}_{y_i}^\top \mathbf{h}_i}}
{\sum_{c} e^{\mathbf{w}_{c}^\top \mathbf{h}_i}}
\;+\; \frac{\lambda}{2}\Bigl(\sum_c \|\mathbf{w}_c\|^2 + \sum_i \|\mathbf{h}_i\|^2\Bigr).
\]
Write $\bar{\mathbf{w}}_k = \frac{1}{|S_k|}\sum_{c \in S_k} \mathbf{w}_c$ and
$\boldsymbol{\delta}_c = \mathbf{w}_c - \bar{\mathbf{w}}_{k(c)}$, so
$\sum_{c \in S_k} \boldsymbol{\delta}_c = 0$ for every $k$.

\subsection{Mean pairwise cosine is fixed by centering, not by learning}

\begin{lemma}[Centering identity]\label{lem:centering}
\textup{[PROVED]}
Let $\mathbf{v}_1,\dots,\mathbf{v}_K \in \mathbb{R}^d$, let
$\tilde{\mathbf{v}}_k = \mathbf{v}_k - \frac{1}{K}\sum_j \mathbf{v}_j$. Then
\[
\frac{1}{K(K-1)}\sum_{j \neq k} \langle \tilde{\mathbf{v}}_j, \tilde{\mathbf{v}}_k\rangle
\;=\; -\,\frac{1}{K-1}\cdot \frac{1}{K}\sum_k \|\tilde{\mathbf{v}}_k\|^2 .
\]
In particular, if all centered norms are equal, the mean pairwise cosine equals
$-1/(K-1)$ exactly, for any configuration whatsoever.
\end{lemma}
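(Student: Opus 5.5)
The plan is to expand the squared norm of the sum of the centered vectors, which vanishes by construction. Since $\tilde{\mathbf{v}}_k = \mathbf{v}_k - \frac{1}{K}\sum_j \mathbf{v}_j$, we have $\sum_k \tilde{\mathbf{v}}_k = \sum_k \mathbf{v}_k - \sum_j \mathbf{v}_j = 0$. Taking the squared norm of this zero vector and expanding bilinearly splits it into a diagonal and an off-diagonal part:
\[
0 \;=\; \Bigl\|\sum_k \tilde{\mathbf{v}}_k\Bigr\|^2 \;=\; \sum_k \|\tilde{\mathbf{v}}_k\|^2 \;+\; \sum_{j\neq k}\langle \tilde{\mathbf{v}}_j,\tilde{\mathbf{v}}_k\rangle .
\]
Hence $\sum_{j\neq k}\langle \tilde{\mathbf{v}}_j,\tilde{\mathbf{v}}_k\rangle = -\sum_k\|\tilde{\mathbf{v}}_k\|^2$. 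Dividing both sides by $K(K-1)$, the number of ordered off-diagonal pairs, and rewriting $\frac{1}{K(K-1)}$ as $\frac{1}{K-1}\cdot\frac{1}{K}$ on the right gives exactly the displayed identity. Only $K\ge 2$ is needed for the normalization to make sense.

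For the cosine statement, suppose every centered norm equals a common value $r$. If $r>0$, each cosine is $\langle \tilde{\mathbf{v}}_j,\tilde{\mathbf{v}}_k\rangle/r^2$. So the mean pairwise cosine is the left-hand side of the identity divided by $r^2$, while the right-hand side divided by $r^2$ is $-\frac{1}{K-1}\cdot\frac{1}{K}\cdot Kr^2/r^2 = -1/(K-1)$. No assumption on the $\mathbf{v}_k$ beyond equal centered norms enters, which is the "any configuration whatsoever" clause.

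There is no real obstacle. The only points needing care are bookkeeping ones: the sum over $j\neq k$ is over ordered pairs, which matches the $K(K-1)$ normalization, and the degenerate case $r=0$, where cosines are undefined, must be excluded. I would also add a remark on the near-equinorm case used in the text. Writing the mean cosine as a norm-weighted average, its deviation from $-1/(K-1)$ is controlled by the relative spread of the centered norms, so approximate equinormality pins the mean cosine near $-1/(K-1)$. This follows by the same computation with $r$ replaced by the norms' range and is not needed for the lemma as stated.
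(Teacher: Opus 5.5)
Your proposal is correct and follows the paper's proof: expand $\|\sum_k \tilde{\mathbf{v}}_k\|^2 = 0$, divide by $K(K-1)$, and in the equal-norm case write each inner product as $r^2\cos\theta_{jk}$. The bookkeeping you add is a small tightening of points the paper leaves implicit, namely requiring $K \ge 2$, excluding $r = 0$, and noting that the sum runs over ordered pairs.
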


\begin{proof}
$\|\sum_k \tilde{\mathbf{v}}_k\|^2 = 0$ expands to
$\sum_k \|\tilde{\mathbf{v}}_k\|^2 + \sum_{j\neq k}\langle \tilde{\mathbf{v}}_j,
\tilde{\mathbf{v}}_k\rangle = 0$; divide by $K(K-1)$. With equal norms $r$, each inner
product term is $r^2 \cos\theta_{jk}$ and the display gives
$\overline{\cos\theta} = -1/(K-1)$.
\end{proof}

\begin{remark}\label{rem:centering}
Consequence for measurement: after centering, a mean pairwise cosine near $-1/(K-1)$
carries no information about learned structure beyond approximate equinormness; it is an
accounting identity. Equiangularity must instead be assessed through the dispersion of
pairwise cosines and the structure of their deviations (Section~\ref{sec:v2frame}). When
centering is done at a weighted grand mean and the partition is unbalanced, the identity
holds approximately, with error controlled by the imbalance; we quantify this in the
appendix. This retroactively voids the mean-cosine ``simplex ETF'' evidence in our own
earlier report and, we suspect, similar readings elsewhere.
\end{remark}

\subsection{Exact decomposition and the reduction to a size-weighted category problem}

\begin{proposition}[Exact split of loss and decay]\label{prop:split}
\textup{[PROVED]}
For any $\mathbf{W}, \mathbf{H}$ and any partition $\{S_k\}$:
\begin{enumerate}
\item[(i)] $\displaystyle\sum_c \|\mathbf{w}_c\|^2 \;=\; \sum_k |S_k|\,\|\bar{\mathbf{w}}_k\|^2
\;+\; \sum_c \|\boldsymbol{\delta}_c\|^2$.
\item[(ii)] The category logit obeys
$\log \sum_{c \in S_k} e^{\mathbf{w}_c^\top \mathbf{h}}
= \bar{\mathbf{w}}_k^\top \mathbf{h} + b_k(\mathbf{h})$ with
$b_k(\mathbf{h}) = \log \sum_{c \in S_k} e^{\boldsymbol{\delta}_c^\top \mathbf{h}}$.
\item[(iii)] The cross-entropy splits exactly as $L_{\mathrm{CE}} = L_{\mathrm{inter}} +
L_{\mathrm{intra}}$, where $L_{\mathrm{inter}}$ is the $K$-class cross-entropy on logits
$\bar{\mathbf{w}}_k^\top \mathbf{h} + b_k(\mathbf{h})$ and
\[
L_{\mathrm{intra}} = -\frac{1}{N}\sum_i \log
\frac{e^{\boldsymbol{\delta}_{y_i}^\top \mathbf{h}_i}}
{\sum_{c \in S_{k(y_i)}} e^{\boldsymbol{\delta}_c^\top \mathbf{h}_i}},
\]
which depends on the residual rows only.
\end{enumerate}
\end{proposition}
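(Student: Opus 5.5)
The three parts are elementary identities, each following from the definition $\mathbf{w}_c = \bar{\mathbf{w}}_{k(c)} + \boldsymbol{\delta}_c$ together with the constraint $\sum_{c\in S_k}\boldsymbol{\delta}_c = 0$. I would prove them in order, and (iii) will reuse (ii).

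\textbf{Part (i): the norm split.} Group the sum by category and expand the square:
\[
\sum_{c\in S_k}\|\bar{\mathbf{w}}_k + \boldsymbol{\delta}_c\|^2 = |S_k|\,\|\bar{\mathbf{w}}_k\|^2 + 2\,\bar{\mathbf{w}}_k^\top\sum_{c\in S_k}\boldsymbol{\delta}_c + \sum_{c\in S_k}\|\boldsymbol{\delta}_c\|^2 .
\]
The cross term vanishes by the centering constraint. Summing over $k$ gives (i). This is the parallel-axis (law of total variance) identity with unit weights per type. That is why $|S_k|$, and not the mass $N_k$, multiplies the mean.

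\textbf{Part (ii): the category logit.} Factor $e^{\bar{\mathbf{w}}_k^\top\mathbf{h}}$ out of $\sum_{c\in S_k}e^{(\bar{\mathbf{w}}_k+\boldsymbol{\delta}_c)^\top\mathbf{h}}$ and take logs. This yields $\bar{\mathbf{w}}_k^\top\mathbf{h} + b_k(\mathbf{h})$ with $b_k$ as defined.

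\textbf{Part (iii): the cross-entropy split.} Fix a sample $i$ with $k = k(y_i)$. Write the softmax probability as a product:
\[
\frac{e^{z_{y_i}}}{\sum_c e^{z_c}} = \frac{\sum_{c\in S_k}e^{z_c}}{\sum_{j}\sum_{c\in S_j}e^{z_c}}\cdot\frac{e^{z_{y_i}}}{\sum_{c\in S_k}e^{z_c}} .
\]
By (ii), the first factor is the $K$-class softmax on the logits $\bar{\mathbf{w}}_j^\top\mathbf{h}_i + b_j(\mathbf{h}_i)$. In the second factor, the common term $e^{\bar{\mathbf{w}}_k^\top\mathbf{h}_i}$ cancels between numerator and denominator, leaving $e^{\boldsymbol{\delta}_{y_i}^\top\mathbf{h}_i}/\sum_{c\in S_k}e^{\boldsymbol{\delta}_c^\top\mathbf{h}_i}$. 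Take $-\log$ and average over $i$; the total splits as $L_{\mathrm{inter}} + L_{\mathrm{intra}}$. The intra term contains only the $\boldsymbol{\delta}$'s and $\mathbf{h}$, which gives the claimed dependence on residual rows.

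\textbf{Main caveat.} There is no real obstacle; everything is bookkeeping. The one point to state carefully is the meaning of ``depends on the residual rows only'' in (iii). It means that, for fixed features $\mathbf{h}_i$, $L_{\mathrm{intra}}$ is independent of the category means $\bar{\mathbf{w}}_k$. By contrast, $L_{\mathrm{inter}}$ still couples to the $\boldsymbol{\delta}$'s through the offsets $b_k$. I would flag this so that the later reduction (Theorem~\ref{thm:reduction}) does not overread the split as a full decoupling.
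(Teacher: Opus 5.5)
Your proof is correct and matches the paper's argument step for step. Part (i) is the orthogonal decomposition, with the cross term killed by $\sum_{c\in S_k}\boldsymbol{\delta}_c = 0$; part (ii) factors out $e^{\bar{\mathbf{w}}_k^\top\mathbf{h}}$; and part (iii) factors $p(y\mid\mathbf{h}) = p(S_{k(y)}\mid\mathbf{h})\,p(y\mid S_{k(y)},\mathbf{h})$ and cancels the shared category term. Your caveat is accurate and worth keeping: $L_{\mathrm{inter}}$ still depends on the residuals through $b_k(\mathbf{h})$, so the split is not a full decoupling.
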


\begin{proof}
(i) is the orthogonal (bias-variance) decomposition using
$\sum_{c\in S_k}\boldsymbol{\delta}_c = 0$. (ii) factor $e^{\bar{\mathbf{w}}_k^\top
\mathbf{h}}$ out of the sum. (iii) write $p(y \mid \mathbf{h}) = p(S_{k(y)} \mid
\mathbf{h})\, p(y \mid S_{k(y)}, \mathbf{h})$ and note
$p(y \mid S_k, \mathbf{h}) = e^{\boldsymbol{\delta}_y^\top \mathbf{h}} / \sum_{c \in S_k}
e^{\boldsymbol{\delta}_c^\top \mathbf{h}}$ because the shared term
$\bar{\mathbf{w}}_k^\top \mathbf{h}$ cancels.
\end{proof}

Note what (i) already says: token-level weight decay does not act uniformly on category
structure. It penalizes the category mean row with an effective coefficient
$\lambda_k = \lambda\,|S_k|$, proportional to the category's TYPE COUNT, while occurrence
mass $N_k$ enters through the loss term. Categories are therefore doubly weighted, by
types in the regularizer and by mass in the risk, and the two weightings are empirically
distinguishable (Section~\ref{sec:v2frame}).

\begin{assumption}[Context-free within-category readout]\label{asm:cf}
$\boldsymbol{\delta}_c^\top \mathbf{h}_i = \beta_c$ for all training contexts $i$, i.e.,
the model's within-category conditionals $p(c \mid S_k, \mathbf{h})$ do not vary with
context.
\end{assumption}

Assumption~\ref{asm:cf} is an idealization, and Section~\ref{sec:v2law} shows it is
quantitatively false in trained models: within-category choice is context-dependent,
and that failure is exactly what Proposition~\ref{prop:floor} prices. The reduction
below is the between-category skeleton of the problem, not a description of real
conditionals; the floor is the correction term.

\begin{theorem}[Exact reduction, uniform within-category conditionals]\label{thm:reduction}
\textup{[PROVED; proof in Appendix~\ref{app:v2proofs}]}
Suppose Assumption~\ref{asm:cf} holds and the empirical within-category conditionals are
uniform, $\hat p(c \mid S_k) = 1/|S_k|$. Then the optimal residuals vanish
($\boldsymbol{\delta}_c = 0$), the offsets are exactly $b_k = \log|S_k|$,
$L_{\mathrm{intra}} = \sum_k \pi_k \log |S_k|$ is a constant, and minimizing $L$ over
$(\mathbf{W}, \mathbf{H})$ is \emph{exactly} the $K$-class UFM with class priors
$\pi_k$, per-class weight-decay coefficients $\lambda_k = \lambda\,|S_k|$ on the
category rows, and fixed per-class offsets $\log|S_k|$.
\end{theorem}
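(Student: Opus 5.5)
The proof will start from Proposition~\ref{prop:split} and use Assumption~\ref{asm:cf} to decouple the objective. Under the assumption, $\boldsymbol{\delta}_c^\top\mathbf{h}_i=\beta_c$ for every training context, so $b_k(\mathbf{h}_i)=\log\sum_{c\in S_k}e^{\beta_c}=:b_k$ is constant in $i$. Substituting parts (i)--(iii) then writes $L$ as the sum of three terms:
\begin{itemize}
\item[(a)] $L_{\mathrm{inter}}(\bar{\mathbf{W}},\mathbf{H};b)$ together with $\frac{\lambda}{2}\bigl(\sum_k|S_k|\|\bar{\mathbf{w}}_k\|^2+\sum_i\|\mathbf{h}_i\|^2\bigr)$;
\item[(b)] $L_{\mathrm{intra}}=-\sum_k\pi_k\sum_{c\in S_k}\hat p(c\mid S_k)\bigl(\beta_c-b_k\bigr)$, which depends on the $\beta$'s only;
\item[(c)] the residual decay $\frac{\lambda}{2}\sum_c\|\boldsymbol{\delta}_c\|^2$.
\end{itemize}
The plan is to show that, for any admissible $(\mathbf{W},\mathbf{H})$, replacing every $\boldsymbol{\delta}_c$ by $0$ does not increase $L$. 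Moreover, it strictly decreases $L$ unless all $\boldsymbol{\delta}_c$ already vanish.

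The key step is a two-part comparison. First, the intra term: with uniform empirical conditionals, (b) is a cross-entropy between the uniform distribution on $S_k$ and the model's softmax over $(\beta_c)_{c\in S_k}$. By Gibbs' inequality it is minimized, with value $\log|S_k|$ per category, exactly when all $\beta_c$ within $S_k$ are equal. Setting $\boldsymbol{\delta}_c=0$ gives $\beta_c=0$, hence this minimum, and also $b_k=\log|S_k|$. Second, the decay term (c) is minimized at $0$ exactly when $\boldsymbol{\delta}_c=0$.

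The main obstacle is the remaining coupling: the offsets $b_k$ also enter $L_{\mathrm{inter}}$. Choosing equal but nonzero $\beta_c=\beta_k$ within a category leaves the intra loss at its minimum while shifting $b_k=\beta_k+\log|S_k|$. In principle this could lower the inter loss, acting as free biases.

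I would handle this by showing that such a shift is not free. Because $\sum_{c\in S_k}\boldsymbol{\delta}_c=0$, we get $\sum_{c\in S_k}\beta_c=\bigl(\sum_c\boldsymbol{\delta}_c\bigr)^\top\mathbf{h}_i=0$. So if all $\beta_c$ in $S_k$ are equal, they must all be $0$, and hence $b_k=\log|S_k|$ exactly. This centering constraint is what pins the offsets, and I expect it to be the crux of the argument.

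Combining the pieces: any feasible point with some $\beta_c\neq 0$ has strictly larger intra loss than the point with $\boldsymbol{\delta}\equiv0$. The inter term can only change through $b_k$, and I must compare jointly. For this I would note that the intra penalty $\sum_k\pi_k\,\mathrm{KL}\bigl(\mathrm{Unif}\,\|\,\mathrm{softmax}(\beta)\bigr)$ together with the inter term equals the full cross-entropy. The full cross-entropy, evaluated at the empirical joint that factorizes as (category)$\times$(uniform within category), is minimized by making within-category logits equal. I would verify this directly via the chain rule in part (iii): for fixed category logits, unequal within-category logits only add KL.

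Once $\boldsymbol{\delta}=0$ is established, $L_{\mathrm{intra}}=\sum_k\pi_k\log|S_k|$ is constant. What remains is $L_{\mathrm{inter}}$ with logits $\bar{\mathbf{w}}_k^\top\mathbf{h}+\log|S_k|$ and regularizer $\frac{\lambda}{2}\bigl(\sum_k\lambda|S_k|/\lambda\,\|\bar{\mathbf{w}}_k\|^2+\sum_i\|\mathbf{h}_i\|^2\bigr)$. Its empirical labels are the categories, with priors $\pi_k$. This is exactly the stated imbalanced $K$-class UFM with per-class decay $\lambda_k=\lambda|S_k|$ and fixed offsets $\log|S_k|$. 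Equivalence of minimizers follows because the map $\mathbf{W}\mapsto(\bar{\mathbf{W}},\boldsymbol{\delta})$ is a bijection onto its centered image.
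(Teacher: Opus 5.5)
Your decomposition, the obstacle you identify, and your crux are all correct. The centering constraint gives $\sum_{c\in S_k}\beta_c=(\sum_{c\in S_k}\boldsymbol{\delta}_c)^\top\mathbf{h}_i=0$, so equal-within-category $\beta$ must vanish. The paper's appendix proof uses the same split (reduced problem, $L_{\mathrm{intra}}(\beta)$, $\frac{\lambda}{2}C(\beta,\mathbf{H})$) and minimizes the last two terms separately. It never addresses that $b_k=\log\sum_{c\in S_k}e^{\beta_c}$ also sits inside the reduced problem, so you are aiming at a more complete argument than the one given.

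The gap is in the step meant to close that coupling. The claim that the full cross-entropy prefers equal within-category logits is true. But your verification, ``for fixed category logits, unequal within-category logits only add KL,'' holds the wrong quantity fixed. With $\bar{\mathbf{w}}$ fixed and centering imposed, you cannot equalize $\beta$ while keeping the category logits $\bar{\mathbf{w}}_k^\top\mathbf{h}+b_k$. Equalizing forces $\beta=0$ and $b_k=\log|S_k|$, whereas any $\beta\neq0$ on $S_k$ has $b_k>\log|S_k|$ by Jensen, since its mean is zero. At fixed category logits the KL argument is minimized by the equal-but-nonzero $\beta_c=b_k-\log|S_k|$. That is exactly the free-bias configuration centering forbids. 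Restoring the logits through $\bar{\mathbf{w}}$ instead (adding $(b_k-\log|S_k|)\mathbf{g}$ with $\mathbf{g}^\top\mathbf{h}_i=1$) changes the $|S_k|$-weighted decay by an amount you have not controlled. As written, the joint comparison is not established.

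The repair is to hold $(\bar{\mathbf{w}},\mathbf{H})$ fixed and use an exact identity. Let $\Delta_k=b_k-\log|S_k|\ge0$. Uniform $\hat p(\cdot\mid S_k)$ and $\mathrm{mean}_{S_k}\beta=0$ give
\[
\mathrm{KL}\bigl(\mathrm{Unif}_{S_k}\,\big\|\,\mathrm{softmax}(\beta_{S_k})\bigr)
=\log\Bigl(\frac{1}{|S_k|}\sum_{c\in S_k}e^{\beta_c}\Bigr)-\mathrm{mean}_{S_k}\beta=\Delta_k .
\]
So replacing $\boldsymbol{\delta}$ by $0$ lowers $L_{\mathrm{intra}}$ by exactly $\sum_k\pi_k\Delta_k$. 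Lowering each offset by $\Delta_k$ raises $L_{\mathrm{inter}}$ by $\sum_k\pi_k\Delta_k$ minus the average (nonnegative) drop of the log-partition term, hence by at most $\sum_k\pi_k\Delta_k$. The cross-entropy therefore does not increase, and the decay falls by $\frac{\lambda}{2}\sum_c\|\boldsymbol{\delta}_c\|^2$. Hence the trivially feasible point $(\bar{\mathbf{w}},0,\mathbf{H})$ strictly beats every feasible point with $\boldsymbol{\delta}\neq0$.

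Equivalently, at fixed $(\bar{\mathbf{w}},\mathbf{H})$ the full cross-entropy is $\frac1N\sum_i\bigl[-\bar{\mathbf{w}}_{k(y_i)}^\top\mathbf{h}_i-\beta_{y_i}+\log\sum_j e^{\bar{\mathbf{w}}_j^\top\mathbf{h}_i}\sum_{c\in S_j}e^{\beta_c}\bigr]$. The $\beta_{y_i}$ term averages to zero under uniform conditionals and centering, and $\sum_{c\in S_j}e^{\beta_c}\ge|S_j|$ by AM--GM. With this substitution the rest of your proposal goes through.
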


\begin{proposition}[General conditionals: the reduction up to an explicit coupling]
\label{prop:general}
\textup{[PROVED with the coupling exhibited; Appendix~\ref{app:v2proofs}]}
Without uniformity, realizing the loss-optimal constants $\{\beta_c\}$ costs residual
norm, and the minimal cost $C(\{\beta_c\}, \mathbf{H})$ depends on the span of the
features. The objective under Assumption~\ref{asm:cf} equals the reduced $K$-class
problem of Theorem~\ref{thm:reduction} (with offsets
$b_k = \log\sum_{c \in S_k} e^{\beta_c}$) plus $\frac{\lambda}{2}\,
C(\{\beta_c\}, \mathbf{H})$, and $C$ is the only channel through which residuals
interact with features. $C \ge 0$, and $C = 0$ exactly in the uniform case. Treating
$C$ as constant recovers the reduced problem as an approximation; we exhibit $C$ in
closed variational form in the appendix, estimate its magnitude on the trained models
at $10^{-6}$ to $10^{-3}$ of the residual-norm budget they actually deploy
(Appendix~\ref{app:reduction}), and leave sharp analytic control of its effect on the
optimum open.
\end{proposition}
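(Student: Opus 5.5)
Under Assumption~\ref{asm:cf} the plan is to fix the within-category constants $\{\beta_c\}$ and the features $\mathbf{H}$, minimize over the residual rows, and rewrite what is left using Proposition~\ref{prop:split}. By Proposition~\ref{prop:split}(ii), once $\boldsymbol{\delta}_c^\top\mathbf{h}_i=\beta_c$ on every training context, the offset is constant in context: $b_k(\mathbf{h}_i)=\log\sum_{c\in S_k}e^{\beta_c}$. Then $L_{\mathrm{inter}}$ is exactly the $K$-class cross-entropy on logits $\bar{\mathbf{w}}_k^\top\mathbf{h}_i+b_k$. Also $L_{\mathrm{intra}}=-\sum_k\sum_{c\in S_k}\hat p(c)\log\bigl(e^{\beta_c}/\sum_{c'\in S_k}e^{\beta_{c'}}\bigr)$, which depends on $\beta$ only. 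By Proposition~\ref{prop:split}(i), the head-row decay is $\sum_k|S_k|\,\|\bar{\mathbf{w}}_k\|^2+\sum_c\|\boldsymbol{\delta}_c\|^2$. The objective therefore separates into three parts: the reduced $K$-class UFM in $(\bar{\mathbf{w}},\mathbf{H})$ with decay $\lambda|S_k|$ and offsets $b_k$; a $\beta$-only term; and $\frac{\lambda}{2}\sum_c\|\boldsymbol{\delta}_c\|^2$.

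The key step is to define
\[
C(\{\beta_c\},\mathbf{H})=\min\Bigl\{\textstyle\sum_c\|\boldsymbol{\delta}_c\|^2:\ \boldsymbol{\delta}_c^\top\mathbf{h}_i=\beta_c\ \forall i,\ \textstyle\sum_{c\in S_k}\boldsymbol{\delta}_c=0\ \forall k\Bigr\},
\]
with $C=+\infty$ when the constraints are infeasible. The problem decouples across categories. For each $c$, the constraint $\boldsymbol{\delta}_c^\top\mathbf{H}=\beta_c\mathbf{1}^\top$ is affine, and the zero-sum constraint couples only rows within the same $S_k$. So $C$ is a least-norm problem over an affine subspace, and its value is a quadratic form in $\beta$ determined by the column space of $\mathbf{H}$. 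Concretely, write $\boldsymbol{\delta}_c=\boldsymbol{\delta}^\star_c+\mathbf{u}_c$, where $\boldsymbol{\delta}^\star_c=\beta_c\,\mathbf{H}^{+\top}\mathbf{1}$ is the minimum-norm solution (assuming $\mathbf{1}$ lies in the row space of $\mathbf{H}$) and $\mathbf{u}_c\perp\mathrm{col}(\mathbf{H})$. The zero-sum constraint forces $\sum_{c\in S_k}\beta_c=0$ on the span component. Since the offsets are invariant to a common shift of the $\beta_c$ within each category, I would first normalize $\sum_{c\in S_k}\beta_c=0$. This gives the closed variational form
\[
C=\|\mathbf{H}^{+\top}\mathbf{1}\|^2\sum_c\beta_c^2,
\]
or an equivalent projection expression when $\mathbf{1}\notin\mathrm{row}(\mathbf{H})$, in which case $C$ is finite only if $\beta=0$. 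This form also shows that $C$ is the only term that couples residuals to $\mathbf{H}$. After minimizing over $\boldsymbol{\delta}$, the objective is the reduced problem plus the $\beta$-only term plus $\frac{\lambda}{2}C$. If $\hat p$ is fixed by data, the $\beta$-only term matches the theorem's constant once $\beta$ is loss-optimal.

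The remaining claims then follow directly. $C\ge0$ because it is a minimum of squared norms. If $C=0$, every $\boldsymbol{\delta}_c=0$, so $\beta_c=0$ and the offsets are $\log|S_k|$. Conversely, in the uniform case the loss-optimal $\beta$ are equal within each category, hence $0$ after normalization, so $C=0$ and the statement reduces to Theorem~\ref{thm:reduction}. Treating $C$ as constant in $\mathbf{H}$ recovers the reduced problem as an approximation.

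I expect the main obstacle to be stating "loss-optimal constants" and the claim "$C=0$ exactly in the uniform case" cleanly. The within-category shift invariance of $\beta$ must be quotiented out, and degenerate feature spans where the constraint is infeasible need care. I would handle both by normalizing $\beta$ within each category and adopting the $+\infty$ convention. The empirical magnitude estimate is not part of the proof.
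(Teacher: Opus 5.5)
Your proposal is correct and follows the paper's own argument. You split loss and decay via Proposition~\ref{prop:split}, define $C$ as the minimum residual norm under the constraints $\boldsymbol{\delta}_c^\top\mathbf{h}_i=\beta_c$ and $\sum_{c\in S_k}\boldsymbol{\delta}_c=0$, and read off the coupling, $C\ge 0$, and $C=0$ iff $\beta=0$ iff the conditionals are uniform. Your exact value $\|\mathbf{H}^{+\top}\mathbf{1}\|^2\sum_c\beta_c^2=\mathbf{1}^\top(\mathbf{H}^\top\mathbf{H})^{+}\mathbf{1}\,\sum_c\beta_c^2$ is the paper's $\|\mathbf{g}\|^2\sum_c\tilde\beta_c^2$, which the appendix uses only as an estimate, and it shows the dependence is on the Gram matrix rather than just the column space.

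One slip: the offsets $b_k$ are not invariant to a common within-category shift of the $\beta_c$; they move by that shift, and only the within-category softmax is invariant. No normalization is needed anyway, because summing $\boldsymbol{\delta}_c^\top\mathbf{h}_i=\beta_c$ over $c\in S_k$ already forces $\sum_{c\in S_k}\beta_c=0$, as you derived one line earlier.
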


\subsection{The geometry of the reduced problem}

\begin{proposition}[Exact $K{=}2$ solution: decay orders the norms, not mass]
\label{prop:k2}
\textup{[PROVED; Appendix~\ref{app:k2}]}
For $K = 2$, only the difference $\boldsymbol{\Delta} = \mathbf{u}_1 - \mathbf{u}_2$
enters the loss, and minimizing the weighted regularizer subject to fixed
$\boldsymbol{\Delta}$ gives $\mathbf{u}_1 = \frac{a_2}{a_1 + a_2}\boldsymbol{\Delta}$,
$\mathbf{u}_2 = -\frac{a_1}{a_1 + a_2}\boldsymbol{\Delta}$: classifier norms satisfy
$\|\mathbf{u}_k\| \propto 1/a_k$ exactly, for every $\lambda$, independent of priors
and offsets. Feature norms are ordered by the offsets (the larger-offset class takes
the smaller feature norm) at finite $\lambda$, an effect of order
$1/\log(1/\lambda)$ that vanishes in the strict SVM limit; priors set the overall
scale and touch the ratios only at second order. In our application ($a_k = \lambda
|S_k|$, $b_k$ increasing in $|S_k|$), classifier norms therefore decrease in category
TYPE COUNT at first order, feature norms follow through the offsets as a weaker
finite-$\lambda$ effect, and occurrence mass is secondary for both.
\end{proposition}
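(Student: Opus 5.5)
The plan is to exploit the fact that for $K{=}2$ the reduced problem of Theorem~\ref{thm:reduction} is a two-class UFM with category rows $\mathbf{u}_1,\mathbf{u}_2$, per-class decays $\frac{a_k}{2}\|\mathbf{u}_k\|^2$, fixed offsets $b_k$, priors $\pi_k$, and feature decay $\frac{\lambda}{2}\sum_i\|\mathbf{h}_i\|^2$. I will split the optimization into an inner problem over how a fixed difference is distributed between the two rows, and an outer problem over the difference and the features.

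\textbf{Step 1 (classifier rows, exact).} The two-class softmax cross-entropy depends on the logits only through
\[
(\mathbf{u}_1-\mathbf{u}_2)^\top\mathbf{h}+(b_1-b_2)=\boldsymbol{\Delta}^\top\mathbf{h}+\beta,
\]
where $\beta=b_1-b_2$. So for fixed $\boldsymbol{\Delta}$ and $\mathbf{H}$, the only remaining freedom is the strictly convex problem of minimizing $a_1\|\mathbf{u}_1\|^2+a_2\|\mathbf{u}_2\|^2$ subject to $\mathbf{u}_1-\mathbf{u}_2=\boldsymbol{\Delta}$. Lagrange stationarity gives $a_1\mathbf{u}_1=-a_2\mathbf{u}_2$, and combining this with the constraint yields the stated formulas. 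Hence $\|\mathbf{u}_1\|/\|\mathbf{u}_2\|=a_2/a_1$ at every optimum. This holds for every $\lambda$, since priors and offsets never entered the inner problem. The minimal value is $\frac{a_1a_2}{a_1+a_2}\|\boldsymbol{\Delta}\|^2$, so the outer problem is a binary logistic UFM with a single row $\boldsymbol{\Delta}$ carrying the harmonic decay $\bar a=a_1a_2/(a_1+a_2)$ and a fixed bias $\beta$.

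\textbf{Step 2 (features).} For fixed $\boldsymbol{\Delta}$, the feature problem separates across samples. Components of $\mathbf{h}_i$ orthogonal to $\boldsymbol{\Delta}$ only add decay, so at the optimum $\mathbf{h}_i=\pm x_i\,\boldsymbol{\Delta}/\|\boldsymbol{\Delta}\|$. By strict convexity of each scalar problem, all samples of class $k$ share the same value $x_k$. Writing $r=\|\boldsymbol{\Delta}\|$ and $s_1=+1,\ s_2=-1$, each $x_k$ solves
\[
\min_x\ \tfrac{1}{N}\log\bigl(1+e^{-(rx+s_k\beta)}\bigr)+\tfrac{\lambda}{2}x^2 ,
\]
which has the stationarity condition $\lambda N x = r\,\sigma\bigl(-(rx+s_k\beta)\bigr)$. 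The right side is decreasing in the effective offset $s_k\beta$, so the class favoured by its offset (larger $b_k$) gets strictly smaller $x_k$. This gives the ordering claim.

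\textbf{Step 3 (asymptotics and priors).} For small $\lambda$ the optimal margins $rx_k+s_k\beta$ grow like $\log(1/\lambda)$. Linearizing the stationarity equation around the symmetric solution, I expect
\[
\frac{x_1}{x_2}=1-\Theta\!\Bigl(\frac{\beta}{\log(1/\lambda)}\Bigr),
\]
so the offset-induced feature asymmetry vanishes in the max-margin limit. Priors enter only the outer first-order condition for $r$: differentiating in $r$ gives $\bar a\, r=\sum_k \pi_k x_k\,\sigma(\cdot)$. That equation fixes the common scale, and it feeds back into the ratio $x_1/x_2$ only through the derivative of the already small asymmetry with respect to $r$, which is second order. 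Finally I substitute $a_k=\lambda|S_k|$ and use that $b_k$ is increasing in $|S_k|$ (for instance $b_k=\log|S_k|$ in the uniform case of Theorem~\ref{thm:reduction}) to read off the application sentence.

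The main obstacle is Step 3: making ``order $1/\log(1/\lambda)$'' and ``second order in priors'' precise through a coupled fixed point in $(r,x_1,x_2)$. I would first solve the scalar equations via the Lambert-$W$ function and expand in $1/\log(1/\lambda)$, and only then perturb in $\pi_1-\pi_2$ around $\pi_1=\pi_2$.
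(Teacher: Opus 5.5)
Your proposal is correct and follows the paper's argument. The paper uses the same two steps: the weighted least-norm split of $\boldsymbol{\Delta}$, giving $\|\mathbf{u}_1\|/\|\mathbf{u}_2\| = a_2/a_1$ independently of $\lambda$, priors and offsets, and then a per-class scalar stationarity condition whose monotonicity in the offset orders the feature norms. The paper writes that condition in the margin variable, $\phi(z) = z - \rho\,\sigma(-z)$ with $\phi(z_1) = \beta$ and $\phi(z_2) = -\beta$; this is exactly your condition after substituting $z_k = r x_k + s_k\beta$ and $\rho = r^2/(\lambda N)$. Like you, the paper does not derive the $1/\log(1/\lambda)$ rate or the second-order role of priors rigorously; it appeals to numerics. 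Your Lambert-$W$ expansion plan would therefore go beyond what the paper proves rather than fall short of it.
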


\begin{theorem}[General $K$: twisted SELI geometry]
\label{conj:seli}
\textup{[PROVED; Appendix~\ref{app:twisted}]}
In the $\lambda \to 0$ limit, the global minimizers of the reduced problem lie in the
twisted simplex-encoded-labels-interpolation (SELI) family: the joint Gram is the
two-block SVD factorization of \citet{thrampoulidis2022imbalance} built on the
weighted logit matrix $Z_w = A^{1/2}\,\bar Z^{*}\,D_n^{1/2}$, with the classifier
block conjugated by $A^{-1/2}$ (so $U^\top U = A^{-1/2}(V^\top V)A^{-1/2}$), and the
optimal logit matrix is the oblique projection
\[
\bar Z^{*} \;=\; I_K \;-\; \mathbf{1}_K\,\mathbf{a}^\top / \mathrm{tr}(A),
\]
which reduces to SELI's centering matrix $I - \mathbf{1}\mathbf{1}^\top/K$ at equal
decay. The variational characterization is proved in full: membership, primal
feasibility, the dual-structure lemma, the zero-duality-gap identity, and the final
sign condition on the dual multipliers (the twisted analogue of SELI's own sign
lemma), which we establish in closed form via an orthogonal-projection factorization
of the optimal weighted logit. The passage from finite-$\lambda$ minimizers to this
margin problem follows SELI's regularization-path argument under the
reparameterization, with the offsets exiting the effective margins at rate
$1/\log(1/\lambda)$, the rate proved exactly at $K{=}2$ and inherited rather than
re-derived at general $K$ (Appendix~\ref{app:twisted}).
\end{theorem}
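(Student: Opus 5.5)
Reparameterization is the route: it turns the reduced problem into the equal-decay problem of \citet{thrampoulidis2022imbalance}, up to a twist we can track explicitly. By Theorem~\ref{thm:reduction} the reduced objective carries category rows $\mathbf{u}_k$ with decay $a_k=\lambda|S_k|$, features $\mathbf{h}_i$ with decay $\lambda$, priors $\pi_k$, and fixed offsets $b_k$. We set $A=\mathrm{diag}(\mathbf{a})$ and $\mathbf{v}_k = (a_k/\lambda)^{1/2}\mathbf{u}_k$, so that the regularizer becomes isotropic in $(V,H)$. The logits then read $\mathbf{u}_k^\top\mathbf{h} = (\lambda/a_k)^{1/2}\mathbf{v}_k^\top\mathbf{h}$, so in the new variables the logit matrix is a row-rescaling by $A^{-1/2}$ (up to the common factor $\lambda^{1/2}$). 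This is the source of the conjugation $U^\top U = A^{-1/2}(V^\top V)A^{-1/2}$ in the statement.

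\textbf{Step 1 (reduction to a margin problem).} Along the regularization path $\lambda\to0$, we would rescale $(U,H)$ by $\log(1/\lambda)$ as in SELI's regularization-path argument. The fixed offsets then contribute $O(1/\log(1/\lambda))$ to the normalized margins; this rate is exact at $K=2$ by Proposition~\ref{prop:k2}, and at general $K$ we only need that it vanishes. The limit directions then minimize the weighted nuclear-type norm $\frac12(\sum_k a_k\|\mathbf{u}_k\|^2 + \sum_i\|\mathbf{h}_i\|^2)$ subject to unit multiclass margins. Because the constraint is convex in the logit matrix $Z=U^\top H$, we would pass to the convex program over $Z$: minimize $\|A^{1/2}ZD_n^{1/2}\|_*$ (after collapsing features of equal label, which costs nothing by convexity) subject to $z_{k,k}-z_{j,k}\ge1$. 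The variational characterization $\min \frac12(\|X\|_F^2+\|Y\|_F^2)$ over $XY=M$ equal to $\|M\|_*$ gives the two-block SVD factorization of $Z_w=A^{1/2}\bar Z^* D_n^{1/2}$ claimed in the statement.

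\textbf{Step 2 (identify $\bar Z^*$).} Primal feasibility of $\bar Z^*=I-\mathbf{1}\mathbf{a}^\top/\mathrm{tr}(A)$ is immediate: each column $k$ is $e_k - (a_k/\mathrm{tr}A)\mathbf{1}$, so every margin is exactly $1$. For optimality we write the Lagrangian with multipliers $\Lambda_{jk}\ge0$ on the margin constraints. The dual requires the dual matrix, transported by $A^{-1/2}$ and $D_n^{-1/2}$, to have operator norm at most $1$, and the gap closes when $\langle\text{dual},Z_w\rangle=\|Z_w\|_*$. We would take the candidate dual to be the polar factor $P_w$ of $Z_w$ (its orthogonal-projection part). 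Zero duality gap is then automatic, and the dual-structure lemma reduces everything to checking that the induced $\Lambda_{jk}$ are entrywise nonnegative.

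\textbf{Step 3 (the sign condition) is the main obstacle.} Here we would compute $P_w$ explicitly. The matrix $\bar Z^*$ is an oblique projection with kernel $\mathbf{1}$ and range $\mathbf{a}^\perp$, and we expect $Z_w$ to factor as an orthogonal projection sandwiched between diagonal scalings. If so, its polar factor should be computable by a rank-one update (Sherman--Morrison on the $\mathbf{1}$/$\mathbf{a}$ directions), giving each off-diagonal $\Lambda_{jk}$ as a ratio of positive weighted sums in $\mathbf{a}$ and $\pi$. Nonnegativity would then follow from Cauchy--Schwarz, or fail only in extreme imbalance, in which case we would restrict the statement. As a sanity check, at $A=aI$ this should recover SELI's sign lemma, and at $K=2$ it should match Proposition~\ref{prop:k2}.
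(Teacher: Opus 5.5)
Your Steps 1 and 2 follow the paper's route. The reparameterization, the nuclear-norm relaxation in $Z_w = A^{1/2} Z D_n^{1/2}$ after collapsing same-label features, all margins of $\bar Z^{*}$ active, your polar factor $P_w$ (call it $W$) as the dual candidate with the gap closing automatically, and the reduction of dual feasibility to $\sqrt{a_k n_c}\,W_{kc} \le 0$ for $k \neq c$ all match Appendix~\ref{app:twisted}.

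The genuine gap is Step 3. It is the one new ingredient of the theorem, and you leave it as an expectation with an escape clause (``or fail only in extreme imbalance, in which case we would restrict the statement''). A plan that allows the sign condition to fail cannot prove a statement that asserts it for all $\mathbf{a}, \mathbf{n} \in (0,\infty)^K$. The mechanism you sketch would also not deliver it:
\begin{itemize}
\item The polar factor is $W = Z_w B^{+1/2}$ with $B = Z_w^\top Z_w$. Sherman--Morrison inverts a rank-one update of a diagonal, but it does not give that matrix's inverse square root. The spectral expansion of the inverse square root runs over the roots of the secular equation $\sum_j g_j^2/(d_j^2 - \mu) = 1$. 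So no closed ``ratio of positive weighted sums'' in $\mathbf{a}$ and $\pi$ exists for Cauchy--Schwarz to act on.
\item $W$ is a partial isometry, not an orthogonal projection. The projection you expect in the factorization has trivially nonpositive off-diagonals, but the sign condition is on $W$, not on that projection.
\end{itemize}

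Three concrete steps close it.
\begin{enumerate}
\item[(i)] The sandwich collapses on the left. With $\hat{\mathbf{s}} = A^{1/2}\mathbf{1}/\sqrt{\mathrm{tr}(A)}$ and $\Pi = I - \hat{\mathbf{s}}\hat{\mathbf{s}}^\top$ one has $\bar Z^{*} = A^{-1/2}\Pi A^{1/2}$, hence $Z_w = \Pi D$ with $D = \mathrm{diag}(\sqrt{a_c n_c})$ and nothing on the left. This is what makes $B = D^2 - \mathbf{g}\mathbf{g}^\top$, with $\mathbf{g} = D\hat{\mathbf{s}}$, a rank-one downdate of a diagonal. It also gives the dual-structure lemma at once, since $\hat{\mathbf{s}}^\top Z_w = 0$ implies $\hat{\mathbf{s}}^\top W = 0$, so $A^{1/2} W D_n^{1/2}$ has zero column sums.
\item[(ii)] Represent the inverse square root by resolvents, $W = \frac{1}{\pi}\int_0^\infty t^{-1/2}\, Z_w (B + tI)^{-1}\, dt$. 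This is legitimate because $Z_w$ annihilates $\ker B$. Then apply Sherman--Morrison to each resolvent, not to the square root.
\item[(iii)] Set $E_t = (D^2 + tI)^{-1}$ and $\varphi(t) = \sum_j g_j^2/(d_j^2 + t)$. Then $\varphi(0^+) = \|\hat{\mathbf{s}}\|^2 = 1$ and $\varphi$ is strictly decreasing, so $1 - \varphi(t) > 0$ for $t > 0$. Using $d_k (E_t \mathbf{g})_k - \hat{s}_k = -\hat{s}_k t/(d_k^2 + t)$, the $(k,c)$ entry of $Z_w (B + tI)^{-1}$ for $k \neq c$ is $-\hat{s}_k g_c t / [(d_k^2 + t)(d_c^2 + t)(1 - \varphi(t))] < 0$ for every $t > 0$.
\end{enumerate}
Integrating gives $W_{kc} < 0$. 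The multipliers $-\sqrt{a_k n_c}\, W_{kc}$ are therefore strictly positive for every $\mathbf{a}, \mathbf{n}$, and no restriction of the statement is needed.
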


Existing theorems do not cover this regime: SELI is proved for a single global
$\lambda$ (their Appendix C.3 permits only a global rescaling) and no offsets, and the
cross-entropy analysis of \citet{hong2024neural} likewise assumes homogeneous
regularization; reparameterizing $\mathbf{u}_k = a_k^{-1/2}\mathbf{v}_k$ shows the
$\lambda \to 0$ limit is precisely the weighted-norm margin problem that scope
excludes, and Theorem~\ref{conj:seli} closes it. Three
consequences deserve emphasis. First, the naive import of the vanilla-SELI ``majority
classes take larger classifier norms'' ordering to the reduced problem is WRONG: a
three-class counterexample with priors and decay weights in opposition shows the
classifier ordering follows $1/a_k$, not mass, and not the ratio $n_k / a_k$
(Appendix~\ref{app:k2}). An earlier draft of this work conjectured the mass ordering;
Proposition~\ref{prop:k2} corrects it. Second, the theorem makes the ordering claim
precise: $1/a_k$ enters as an exact conjugation factor on the classifier Gram, so
decay (type count) is the first-order orderer with a secondary mass modulation; the
strict law $\|\mathbf{u}_k\| \propto 1/a_k$ is exact only at $K{=}2$, and at
$K \ge 3$ the prediction is ordinal (limit-exact draws with decay and mass log-uniform
across two decades: median Spearman between $\|\mathbf{u}_k\|$ and $1/a_k$ of $+1.0$
at both $K{=}3$ and $K{=}4$, positive in over 99\% of draws; the shipped sweep script
reproduces these, and narrower decay ranges lower only the strict-ranking rates).
Third, this assigns the two empirical channels of
Section~\ref{sec:v2frame} to different mechanisms: the type-count channel is
first-order theory; the mass channel is not, and is instead the token-level frequency
structure that section controls for. Both assignments are confirmed empirically, on
feature centroids and directly on the models' own unembedding rows (24 of 24 cells;
Section~\ref{sec:v2frame}).

\subsection{An information floor on within-category dispersion}

If within-category choice requires context, Assumption~\ref{asm:cf} must fail, and the
failure is quantitative: realizing context-varying conditionals through a linear head
forces feature dispersion along the residual readout directions.

\begin{proposition}[Information floor, binary case]\label{prop:floor}
\textup{[PROVED]}
Fix a category $S_k = \{a, b\}$ and write $q(\mathbf{h}) = p(a \mid S_k, \mathbf{h}) =
\sigma\bigl(\mathbf{u}^\top \mathbf{h}\bigr)$ up to a constant, where $\mathbf{u} =
\boldsymbol{\delta}_a - \boldsymbol{\delta}_b$, $\|\mathbf{u}\| \le R$. Let contexts be
drawn from the conditional context distribution of the category, let $\bar q =
\mathbb{E}[q(\mathbf{h})]$, and let
$I_k = \mathbb{E}\,\mathrm{KL}\bigl(q(\mathbf{h}) \,\|\, \bar q\bigr)$ be the
model-realized conditional mutual information $I(c;\mathbf{h} \mid S_k)$. Then the
within-category feature dispersion $D_k = \mathbb{E}\|\mathbf{h} -
\mathbb{E}[\mathbf{h}]\|^2$ satisfies
\[
D_k \;\ge\; \frac{16\,\bar q(1-\bar q)}{R^2}\; I_k .
\]
\end{proposition}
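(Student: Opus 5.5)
The plan is to sandwich $I_k$ between two elementary inequalities: a $\chi^2$ upper bound on each KL term, and a Lipschitz bound on the fluctuation of $q$. Together they turn an information quantity into a variance of $q(\mathbf{h})$, and then into a variance of $\mathbf{h}$ along $\mathbf{u}$. No earlier result beyond the setup of Proposition~\ref{prop:split} is needed. The phrase ``up to a constant'' I will read as $q(\mathbf{h}) = \sigma(\mathbf{u}^\top\mathbf{h} + \beta)$ for some fixed offset $\beta$. That offset is harmless, because only the Lipschitz constant of $t \mapsto \sigma(t+\beta)$ enters. If $\bar q \in \{0,1\}$, then $q \equiv \bar q$ almost surely, so $I_k = 0$ and the bound is trivial. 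I therefore assume $0<\bar q<1$.

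\emph{Step 1: KL to $\chi^2$.} For Bernoulli distributions I use $\mathrm{KL}(q\,\|\,\bar q) \le \log(1+\chi^2(q\,\|\,\bar q)) \le \chi^2(q\,\|\,\bar q)$. This follows from Jensen applied to $\log$ under the first argument. A direct computation gives the Bernoulli $\chi^2$:
\[
\chi^2(q\,\|\,\bar q) = \frac{(q-\bar q)^2}{\bar q} + \frac{(q-\bar q)^2}{1-\bar q} = \frac{(q-\bar q)^2}{\bar q(1-\bar q)}.
\]
Taking expectations over contexts gives $I_k \le \mathrm{Var}(q(\mathbf{h}))/\bigl(\bar q(1-\bar q)\bigr)$.

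\emph{Step 2: variance of $q$ to variance of $\mathbf{h}$.} The variance is the minimal mean squared deviation over constants. I will compare $q$ with the constant $c = \sigma(\mathbf{u}^\top\mathbb{E}[\mathbf{h}] + \beta)$. The logistic function is $\tfrac14$-Lipschitz, since $\sigma' = \sigma(1-\sigma) \le \tfrac14$. Combining this with Cauchy--Schwarz and $\|\mathbf{u}\|\le R$ gives
\[
\mathrm{Var}(q) \le \mathbb{E}\bigl(q(\mathbf{h})-c\bigr)^2 \le \tfrac{1}{16}\,\mathbb{E}\bigl(\mathbf{u}^\top(\mathbf{h}-\mathbb{E}\mathbf{h})\bigr)^2 \le \tfrac{R^2}{16}\, D_k .
\]

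\emph{Step 3: combine.} Steps 1 and 2 give $I_k \le R^2 D_k / \bigl(16\,\bar q(1-\bar q)\bigr)$. Rearranging yields the stated floor. As a byproduct, the middle inequality of Step 2 shows that only the dispersion along $\mathbf{u}$ is forced. This is the sharper reading the surrounding text alludes to with ``dispersion along the readout directions.''

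I expect the main obstacle to be Step 1, where the direction of the inequality matters. One needs an \emph{upper} bound on KL by a quadratic, not Pinsker's lower bound. The $\chi^2$ route supplies exactly this, with the $1/(\bar q(1-\bar q))$ factor that produces the prefactor $16\,\bar q(1-\bar q)$. The remaining care is in checking that the model-realized $I_k$ is indeed $\mathbb{E}\,\mathrm{KL}(q\,\|\,\bar q)$ with $\bar q$ the mixture mean. That identification is what makes the variance, rather than some other centered second moment, appear.
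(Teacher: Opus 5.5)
Your proof is correct and follows the paper's three-step chain: the binary $\chi^2$ upper bound on the KL, the $\tfrac14$-Lipschitz bound on $\sigma$, then Cauchy--Schwarz through the trace, arriving at the same constant $16\,\bar q(1-\bar q)/R^2$. The only difference is cosmetic and sits in the Lipschitz step: you compare $q$ with the constant $\sigma(\mathbf{u}^\top\mathbb{E}[\mathbf{h}]+\beta)$ using the variational characterization of variance, whereas the paper symmetrizes with an independent copy $\mathbf{h}'$; both give $\mathrm{Var}(q)\le\tfrac{1}{16}\mathrm{Var}(\mathbf{u}^\top\mathbf{h})$.
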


\begin{proof}
Three inequalities chain. (1) The binary KL is bounded by chi-square:
$\mathrm{KL}(q\|\bar q) \le (q-\bar q)^2 / (\bar q(1-\bar q))$, hence
$\mathbb{E}(q-\bar q)^2 \ge \bar q(1-\bar q) I_k$ and, since $\bar q = \mathbb{E} q$,
$\mathrm{Var}(q) \ge \bar q(1-\bar q) I_k$. (2) $\sigma$ is $\tfrac14$-Lipschitz, so for
independent copies $\mathbf{h}, \mathbf{h}'$, $(q - q')^2 \le \tfrac{1}{16}
(\mathbf{u}^\top(\mathbf{h}-\mathbf{h}'))^2$; taking expectations,
$\mathrm{Var}(q) \le \tfrac{1}{16}\mathrm{Var}(\mathbf{u}^\top \mathbf{h})$. (3)
$\mathrm{Var}(\mathbf{u}^\top \mathbf{h}) = \mathbf{u}^\top \Sigma\, \mathbf{u} \le
\|\mathbf{u}\|^2 \operatorname{tr}(\Sigma) \le R^2 D_k$ with $\Sigma$ the context
covariance. Combining the three steps,
$R^2 D_k \ge \mathrm{Var}(\mathbf{u}^\top \mathbf{h}) \ge 16\,\mathrm{Var}(q) \ge
16\,\bar q(1-\bar q)\, I_k$.
\end{proof}

\begin{remark}[What the floor floors, and what it does not]\label{rem:floor}
Step (3) passes through the trace by Cauchy--Schwarz and is deliberately loose, so the
proved inequality lower-bounds the \emph{total} within-category dispersion
$D_k = \mathrm{tr}(\Sigma)$. It thus establishes that context-dependent within-category
choice requires within-category dispersion, but it does not by itself say which
component of that dispersion carries the requirement. That component question is
settled empirically, not here. Section~\ref{sec:v2law} also tests the inequality
directly on the readout-margin variance it controls, without the trace step: it holds
in all 286 within-category pairs measured and that variance tracks $I_k$. The total is a poor place to look: it also stores
context for the rest of the predictive distribution and can be large for reasons
unrelated to $I_k$, and indeed it does not track $I_k$ at all
(Section~\ref{sec:v2law}, negative control). The component that does track $I_k$ is the
token-identity dispersion, the spread of token class means within a category. We read
this as the required dispersion surfacing in the identity channel: the readout
direction $\mathbf{u} = \boldsymbol{\delta}_a - \boldsymbol{\delta}_b$ is a difference
of classifier rows, and classifier rows co-move with token feature means in these
models (Section~\ref{sec:v2frame}), so dispersion forced along $\mathbf{u}$ appears
between token means. This localization is consistent with the floor but is not a
consequence of it; a component-level bound is the natural next theorem. The general-$K$
statement, with constants degrading in $K$ through the softmax Lipschitz constant and a
pairwise chi-square bound, is stated in the appendix as a conjecture with the same
proof architecture.
\end{remark}

\begin{remark}[Relation to the softmax bottleneck]
\citet{yang2018breaking} bound the GLOBAL rank of realizable log-probability matrices by
the head dimension. Proposition~\ref{prop:floor} is a local, per-category refinement in
an information metric: it does not limit what the model can express globally, but prices
context-dependent within-category choice in units of feature dispersion. It thereby
explains why within-class variability in language models cannot fully collapse
\citep{wu2024linguistic}: the residual variability is not unfinished collapse but the
storage cost of conditional information.
\end{remark}

\section{The information law: identity dispersion tracks conditional information}
\label{sec:v2law}

Proposition~\ref{prop:floor} prices context-dependent within-category choice in units of
feature dispersion along readout directions, and Remark~\ref{rem:floor} identifies the
component of dispersion the price should load on: the spread of token class means within
a category (identity dispersion), not total within-category variance, because a
context-free preference over category members can be realized by head rows alone. This
section tests that prediction per category.

\paragraph{Estimating per-category conditional information.}
For each model we run one inference pass over the extraction corpus and accumulate, for
every category $S_k$ of every partition, the model's within-category conditional
$q(c \mid S_k, \mathbf{h}) \propto \exp(\mathbf{w}_c^\top \mathbf{h})$, $c \in S_k$,
restricted to covered types. The model-realized conditional information is
\[
\hat I_k \;=\; H\bigl(\bar q_k\bigr) \;-\; \mathbb{E}_{\mathbf{h}}\,
H\bigl(q(\cdot \mid S_k, \mathbf{h})\bigr),
\qquad \bar q_k = \mathbb{E}_{\mathbf{h}}\, q(\cdot \mid S_k, \mathbf{h}),
\]
which is nonnegative by concavity and equals $I(c; \mathbf{h} \mid S_k)$ under the
model's conditionals with the empirical context distribution. We compute a realized
estimator (contexts whose true next token falls in $S_k$) and a posterior-weighted
estimator (all contexts, weighted by $p(S_k \mid \mathbf{h})$); they agree closely, and
a variant using corpus counts for the marginal term guards against
self-reference. Identity dispersion $B_k$ (the spread of token means \emph{between}
category members), within-token variance $W_k$, and their total $V_k = B_k + W_k$ come
from the class statistics of Section~\ref{sec:v2measurement} (count-weighted). Models: GPT-2, Pythia-410M, Pythia-1.4B, Qwen2.5-1.5B. Partitions:
$k$-means $K \in \{10, 20, 50\}$ and part-of-speech; categories with fewer than 200
realized positions are excluded.

\begin{figure}[t]
\centering
\includegraphics[width=\textwidth]{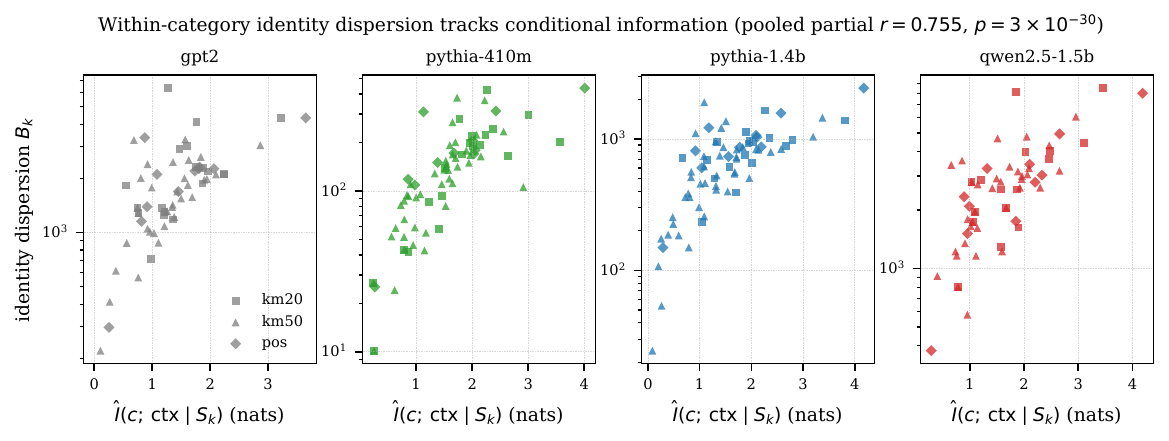}
\caption{Identity dispersion $B_k$ (log scale) against model-realized conditional
information $\hat I_k$, per category, for four models and three partitions. The
relationship is positive in every model-partition combination tested (16/16 including
$K{=}10$, not shown for clarity); pooled within-model-rank partial correlation
$r = 0.755$ ($p = 3\times 10^{-30}$, $n = 157$ categories at $K{=}50$) controlling for
marginal entropy, category mass, and type count.}
\label{fig:v2gatea}
\end{figure}

\begin{table}[t]
\centering
\small
\caption{Spearman correlation of identity dispersion $B_k$ with $\hat I_k$ across
categories, per model and partition ($n$ = number of categories passing the position
filter). Every cell is positive; cells with more categories are systematically stronger.}
\label{tab:v2law}
\begin{tabular}{lrrrr}
\toprule
\textbf{Model} & $K{=}10$ & $K{=}20$ & $K{=}50$ & \textbf{POS} \\
\midrule
GPT-2         & $+0.75$ & $+0.58$ ($n{=}19$) & $+0.67$ ($n{=}35$) & $+0.71$ ($n{=}11$) \\
Pythia-410M   & $+0.49$ & $+0.82$ ($n{=}20$) & $+0.85$ ($n{=}44$) & $+0.81$ ($n{=}11$) \\
Pythia-1.4B   & $+0.52$ & $+0.70$ ($n{=}18$) & $+0.75$ ($n{=}44$) & $+0.79$ ($n{=}11$) \\
Qwen2.5-1.5B  & $+0.39$ & $+0.68$ ($n{=}16$) & $+0.62$ ($n{=}34$) & $+0.82$ ($n{=}11$) \\
\bottomrule
\end{tabular}
\end{table}

\paragraph{The law, and the placebos.}
Figure~\ref{fig:v2gatea} and Table~\ref{tab:v2law} give the central result. Identity
dispersion tracks conditional information in 16 of 16 model-partition combinations, at
Spearman $\rho = 0.58$--$0.85$ wherever the partition supplies more than ten categories.
The relationship survives rank-based partial correlation controlling simultaneously for
marginal within-category entropy, log category mass, and type count (pooled within-model
ranks: $r = 0.745$, $p = 4\times10^{-14}$, $n = 73$ at $K{=}20$; $r = 0.755$,
$p = 3\times10^{-30}$, $n = 157$ at $K{=}50$; posterior-weighted estimator with
least-squares partialing, released with the code; a precision-matrix recomputation
gives slightly larger values, so these figures are conservative). Three negative
controls (placebos) discriminate the
information account from blander alternatives. Total within-category variance $V_k$,
dominated by within-token context storage, shows no relationship with $\hat I_k$ (mixed
signs across models and partitions; at $K{=}10$, $-0.19$ to $+0.52$, and in the single
cell where it is nominally large, Qwen2.5 at $K{=}10$, it is inverted at every richer
partition): the floor loads on the identity component, as it should.
Within-token variance $W_k$ alone likewise shows nothing. And $B_k$ tracks conditional
information roughly twice as strongly as it tracks the marginal entropy
$H(c \mid S_k)$ (at $K{=}50$: $0.62$--$0.85$ versus $0.27$--$0.36$): what predicts
dispersion is not how many alternatives a category has but how much context is needed to
choose among them, which is exactly the distinction between head-expressible static
preferences and dispersion-requiring conditional ones.

\paragraph{The floored quantity, tested directly.}
Identity dispersion is a proxy for what Proposition~\ref{prop:floor} actually bounds,
which is the variance of the readout margin along $\mathbf{u} = \boldsymbol{\delta}_a -
\boldsymbol{\delta}_b$. That variance is $\mathrm{Var}(\text{logit}_a - \text{logit}_b)$
and is measurable directly from logits, with no hidden states and no layer choice, so we
test the theorem on the quantity it controls rather than on the proxy. For the dominant
within-category token pair of each category (286 pairs across the four models and the
$K{=}20$, $K{=}50$, and POS partitions), the proved inequality
$\mathrm{Var}(\mathbf{u}^\top\mathbf{h}) \ge 16\,\bar q(1-\bar q)\, I$ holds in every one
of the 286 pairs, and the margin variance tracks the pair's conditional information at
Spearman $0.94$ (per model $0.89$--$0.95$; permutation $p < 10^{-4}$). The bound is
genuine but loose: the median margin variance is $21\times$ its floor, so the
information requirement accounts for the ordering of the margin variances without
pinning their level. This links conditional information to the quantity the theorem
bounds; the further link to identity dispersion is the head-row and feature-mean
co-movement of Section~\ref{sec:v2frame}.

\paragraph{The loss-side anchor.}
The same inference pass yields the exact decomposition of cross-entropy into
between-category and within-category terms (Proposition~\ref{prop:split}(iii)). The
within-category share is 0.59--0.64 at $K{=}10$ across the four models, 0.43--0.53 at
$K{=}50$, and 0.46--0.48 under part-of-speech categories: the majority of next-token
prediction, at coarse granularity, is within-category choice. The variance-dominant
component of the geometry (Section~\ref{sec:v2measurement}) is thus also the
loss-dominant component of the task, which is what an allocation-follows-information
account requires.

\paragraph{Extremes are informative.}
Categories where context nearly determines the token (conditional entropy
$\mathbb{E}\,H(q \mid \mathbf{h})$ of 0.2--0.6 nats against marginal entropies of 2--4)
carry the largest identity dispersion and the largest identity share of their category
variance (up to 0.29): determiner-like and sentence-initial categories are the clearest
cases. High-frequency function-word categories combine large $\hat I_k$ with moderate
dispersion. The pattern is monotone through the middle of the range; per-category tables
for all models appear in the released artifacts.

\paragraph{Robustness and scope.}
The weakest cells are $K{=}10$ (ten points per model); inference there rests on
cross-partition and cross-model consistency rather than any single correlation. The
pooled $p$-values above are not exact tests, and we do not read them as such: models
within a family share tokenizers, and categories overlap in membership across models,
so the pooled categories are not independent draws. Three checks address the two
concerns this raises, non-independence and self-reference, without relying on the
pooled figure. First, a permutation test that shuffles conditional information within
each model, which preserves every marginal and the within-model dependence structure,
still rejects: the observed pooled partial correlation ($r = 0.70$) is not reached in
any of 20{,}000 permutations ($p < 5\times10^{-5}$). Second, the law does not depend on
reading information off the model's own head. Replacing the model posterior with a
model-free corpus-count estimate of conditional information leaves the partial
correlation essentially unchanged ($r = 0.68$ pooled; $0.65$--$0.71$ per model). Third,
and most directly against a within-model logit-variance tautology: on the exogenous
part-of-speech partition, whose categories are comparable across models, one model's
conditional information predicts a \emph{different} model's identity dispersion in all
twelve ordered model pairs (median Spearman $0.76$). A quantity that is merely a
restatement of one model's geometry cannot predict another model's geometry across a
change of architecture and tokenizer. The estimator remains model-based by design in
the main analysis (the floor concerns information the model actually realizes); these
checks show the correlation is not an artifact of that choice. Two further checks move
the categories and the corpus outside the model's own geometry and extraction
distribution. Categories from a semantic ontology: mapping covered tokens to WordNet
supersenses (25 categories, defined with no reference to the representations)
preserves the law in all four models (Spearman $0.64$--$0.83$; $0.49$--$0.64$ partial
given marginal entropy). And corpus transfer: re-running the identical pipeline on a
Pile sample with partitions held fixed leaves the allocation essentially unchanged
(context $83$--$87\%$, category $6$--$10\%$ at $K{=}10$, within a few points of the
same pipeline's WikiText values) and preserves the law in 12 of 12 model-partition
cells at $K \ge 20$ (median $\rho = 0.61$, against $0.73$ on WikiText; the
ten-category cells are noise-limited on both corpora). Nor is the law an artifact
of the layer: recomputing the variance components at the final layer (same partitions,
same information estimates, which are layer-free) preserves the law in three of the
four models ($\rho = 0.51$--$0.86$ across km20, km50, and POS); the exception is
GPT-2's final layer, whose geometry is dominated by a few extreme-magnitude
dimensions, a documented pathology of that model's last layer. Treating the model as
the unit of inference, a random-effects meta-analysis over per-model slopes (rank
scale, same three controls) gives all four slopes positive with pooled slope $1.09 \pm
0.10$ ($z = 11.0$; full vocabulary: $1.20 \pm 0.11$, $z = 10.6$) and small
between-model heterogeneity ($\tau^2 \le 0.04$); a mixed-effects fit with a random
information slope by model agrees. The four models span
three families, tied and untied heads, and a 12x parameter range.

\paragraph{The law at full vocabulary.}
Re-running the estimator on the train corpus (50{,}000 sequences, 25.6M positions) with
partitions over the full covered vocabulary (24k--26k types per model, the overcomplete
regime with $C/d \in [12, 34]$) reproduces the law at roughly 45x the type coverage
and 100x the sequences: identity
dispersion tracks conditional information in 12 of 12 further model-partition
combinations, and the pooled within-model-rank partial correlation rises to
$r = 0.773$ ($p = 6\times10^{-41}$, $n = 200$ categories at $K{=}50$; identity share:
$r = 0.750$). The placebos persist: total variance is sign-inconsistent across models
($-0.72$ to $+0.83$), and marginal entropy predicts $B_k$ far more weakly than
$\hat I_k$ does ($-0.22$ to $+0.54$). At this scale the dimensionless identity share
$B_k / V_k$ is the sturdier per-model statistic: at $K{=}10$ all four models exceed
$\rho = 0.68$, whereas raw $B_k$ has one weak cell (Pythia-1.4B, $+0.31$) driven by a
single very-high-variance function-word category. The loss-side anchor also
strengthens with coverage: at full vocabulary, 73--75\% of cross-entropy is resolved
within $K{=}10$ categories (58--62\% at $K{=}50$).

\section{The centroid frame: type-count ordering and its attribution}
\label{sec:v2frame}

Theorem~\ref{thm:reduction} and Proposition~\ref{prop:general} say the between-category
problem is an imbalanced $K$-class problem doubly weighted by mass (in the risk) and
type count (in the decay and offsets). What geometry that produces is not a matter of
importing vanilla imbalanced-collapse results
\citep{thrampoulidis2022imbalance, fang2021exploring, hong2024neural}: our exact
$K{=}2$ solution (Proposition~\ref{prop:k2}) shows the decay and offset channels
dominate, so at first order both classifier and feature norms decrease in category
TYPE COUNT, with occurrence mass entering only at second order. Since type count and
mass are correlated across real categories, both orderings appear in raw data; the
work of this section is to separate them. Because Lemma~\ref{lem:centering} retires
mean-cosine evidence, norms and structured deviations are the right observables. We
test across all 14 models and then try to break the result with the hardest null we
could construct. The cleanest test is on the models' own unembedding rows, and it
passes in every cell (24 of 24; end of section); the centroid-frame evidence below is
weaker and needs a frequency control, which is the work of this section.

\paragraph{The ordering holds with one sign in 14 of 14 models.}
For each model, we correlate centered mass-weighted centroid norms with log category
mass across the $K{=}10$ categories (Figure~\ref{fig:v2frame}). The correlation is negative in every model
(Spearman $-0.38$ to $-0.83$; sign test $p = 1.2\times10^{-4}$), and the pairwise-cosine
analogue (heavier pairs more antipodal) is likewise negative in every model. Against
size-matched shuffled partitions the learned coupling is stronger in direction in all
models, though only a minority of models individually clear the 10th percentile of the
shuffle null: as elsewhere in this paper, the evidence is cross-model sign consistency,
not per-model significance.

\begin{figure}[t]
\centering
\includegraphics[width=\textwidth]{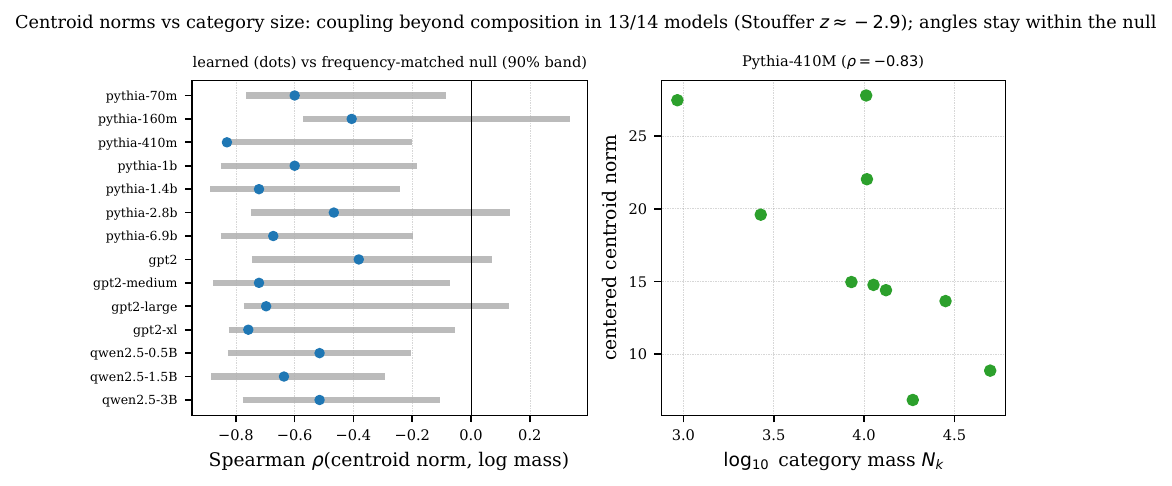}
\caption{Left: per-model Spearman correlation of centered centroid norm with log
category mass (dots) against the frequency-matched shuffle null (gray: mean and 90\%
band). The coupling is negative in 14/14 models and below its null mean in 13/14; the
evidence is the systematic shift (Stouffer $z \approx -2.9$). Right: the strongest
single model.}
\label{fig:v2frame}
\end{figure}

\paragraph{The hard null: frequency-matched shuffles.}
A mundane mechanism could produce the same sign: if frequent \emph{tokens} individually
sit closer to the grand mean, any grouping inherits a negative mass-norm relation from
composition alone. The token-level relation is real in all 14 models (Spearman of
centered class-mean norm against log frequency: $-0.25$ to $-0.38$; compare the
frequency-norm distortions long documented for embedding spaces,
\citealp{gao2019representation}). We therefore re-test against frequency-matched
shuffles, which permute category labels within log-frequency quartiles, preserving each
category's size and frequency profile and hence its mass.

This null reproduces roughly
70\% of the coupling (mean learned $\rho = -0.61$ against a null mean of $-0.44$). The
residual is individually inconclusive in any one model but systematic across them: the
learned coupling is more negative than its null mean in 13 of 14 models, and combining
per-model percentiles by Stouffer's method gives $z \approx -2.9$
($p \approx 1.9\times10^{-3}$, one-sided); as with the law section's pooled figures,
family-shared tokenizers make this a consistency statistic rather than an exact test.
The pairwise-cosine coupling, by contrast, sits squarely inside the frequency-matched
null (mean percentile 0.51) and we do not claim it. The frame claim of this paper is
therefore deliberately scoped: type-count-ordered norms as the confirmed first-order
prediction (strongest on the head rows below); a residual mass coupling beyond
composition, at modest strength; and, for angular structure, nothing beyond the null.

\paragraph{A conjectural second reading, subordinate to the control.}
We control for the token-level frequency-norm relation as a confound, and the
norm-ordering claim survives; nothing here rests on more. We note only, without
evidentiary weight, that this confound runs in the same direction imbalanced-collapse
theory predicts when tokens are treated as the finest categories, so one size-weighted
mechanism at every partition level could in principle produce both the confound and
the signal; separating coincidence from nesting is now a concrete test, since
Theorem~\ref{conj:seli} supplies the functional form, and fitting it to the measured
frames is scheduled for the conference version.

\paragraph{Two channels, two mechanisms.}
Pooling within-model ranks across the 140 categories, centroid norm relates to log
type count controlling for log mass at $r = -0.41$ ($p = 5\times10^{-7}$), and to log
mass controlling for log type count at $r = -0.51$ ($p = 2\times10^{-10}$). The
corrected theory assigns these to different mechanisms: the type-count partial is the
channel Proposition~\ref{prop:k2} predicts at first order (decay and offsets), while
the mass partial is NOT first-order theory and is precisely the channel the
frequency-matched null above addresses (token-level frequency structure aggregating to
categories). An earlier draft read both channels as one prediction under an imported
mass ordering; the $K{=}2$ solution shows that import was wrong, and the present
assignment is the operative one. These remain observational partials on ten categories
per model, not an identification.

The head-row side of Proposition~\ref{prop:k2},
however, is directly testable from released model weights, and it holds: across the
four models, both coverage regimes, and $K \in \{10, 20, 50\}$, the correlation
$\rho(\|\bar{\mathbf{w}}_k\|, \log|S_k|)$ is negative in 24 of 24 cells (median
$-0.54$), the type-count partial given mass is negative in 24 of 24 (median $-0.50$),
and the mass partial given type count is sign-unstable (positive in a third of cells):
the head rows reproduce the same type-count-first-order, mass-second-order asymmetry
as the feature centroids. The log-log slopes ($-0.03$ to $-0.36$) are far shallower
than the $K{=}2$-exact $-1$, as expected for networks that are not at the UFM optimum;
the tested predictions are the ordering and the channel asymmetry, not the exponent.

\section{Dynamics: crystallize, overshoot, reallocate, partially recover}
\label{sec:v2dynamics}

The allocation of Section~\ref{sec:v2measurement} is stable across final checkpoints but
is anything but static in training time. We track the three variance shares over public
pretraining checkpoints under a fixed partition, so that trajectory changes reflect
representational change rather than re-clustering: categories are defined once by
$k$-means on the final checkpoint's class means (with an exogenous part-of-speech
partition as a clustering-free control), and shares are recomputed at every checkpoint
on the common covered-token set.

\paragraph{Checkpoint hygiene.}
Loading intermediate checkpoints through revision tags silently returned identical
weights for one model in our earlier pipeline (all eight Pythia-2.8B ``checkpoints''
were the final model; the flat trajectory was the tell). The OLMo-2 checkpoints are
therefore downloaded per revision into separate directories
with per-checkpoint weight fingerprints (all 14 distinct); the earlier Pythia dumps
predate this guard and are screened by trajectory non-constancy, the test that exposed
2.8B. Pythia-2.8B is excluded pending re-extraction; Pythia-6.9B
was extracted under the smaller covered-token protocol and is shown for shape only.

\begin{figure}[t]
\centering
\includegraphics[width=\textwidth]{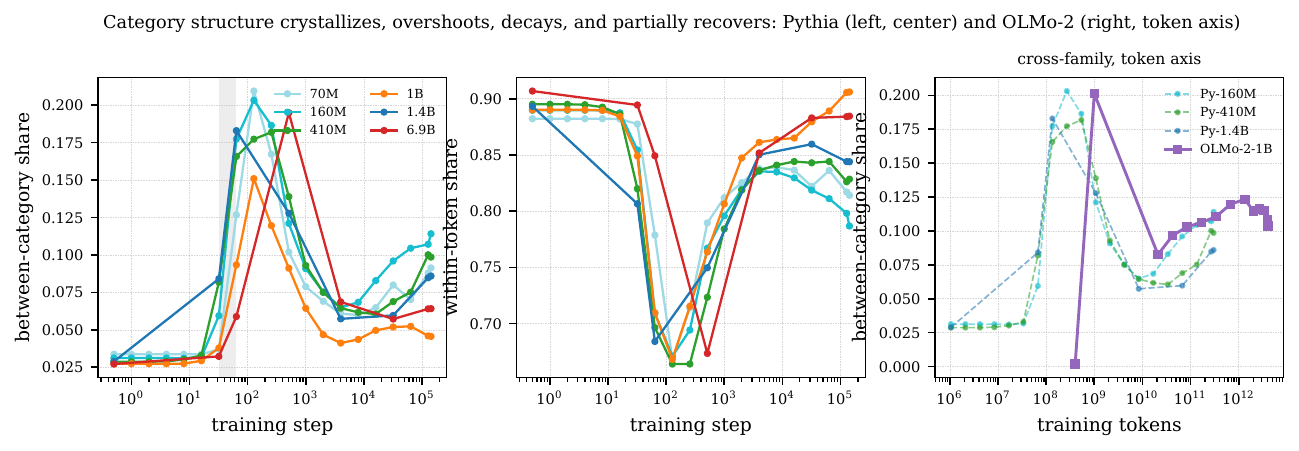}
\caption{Variance-share trajectories under fixed final-checkpoint partitions
(count-weighted, common token sets). Left: Pythia between-category share, onset window
(steps 32--64) shaded. Center: Pythia within-token share. Between-category structure
overshoots within the first thousand steps, decays, and partially recovers in four of
six sizes (Pythia-1B, orange, excepted). Right: OLMo-2-1B reproduces the overshoot, a
minimum near 21B tokens, and late recovery.}
\label{fig:v2dynamics}
\end{figure}

\paragraph{The Pythia trajectory.}
Figure~\ref{fig:v2dynamics} shows five phases, consistent across the sizes whose
checkpoint grids resolve them.
(1) \emph{Dormancy.} Through step 16 the between-category share sits at its
initialization value (about 0.03).
(2) \emph{Onset.} Between steps 32 and 64 it multiplies severalfold, at every size whose
grid covers this window; the onset step is scale-invariant to the resolution of the
grid.
(3) \emph{Overshoot.} It peaks at 0.15--0.21 within the first 64--512 steps, about two
to threefold its final value (1.8--3.3x across sizes); the within-token share
correspondingly dips to 0.66--0.68, its minimum over all of training.

(4) \emph{Reallocation.} The between-category share then decays over two orders of
magnitude in steps, reaching minima of 0.057--0.065 at steps 4k--32k, while the
within-token share re-expands past 0.83: the model, having crystallized coarse
structure, progressively reallocates variance to context.
(5) \emph{Partial recovery.} In four of six sizes the between-category share then rises
again by a factor of 1.5--1.8 (for example 160M: 0.065 at step 4k to 0.114 at step
143k), with a mild 1.12x at 6.9B on its coarse grid; Pythia-1B, an outlier on several
static measures as well, instead declines overall after its peak to 0.046 (with a
small non-monotonicity near step 64k). The
standard cluster-level CDNV ($\alpha{=}0$) tells the same story in its own units: best
collapse early (minima 3.1--4.9 at steps 256--1000), worse at the end (4.8--9.0). In
classification, the terminal phase collapses representations; here the terminal phase
partially un-collapses them, while validation loss improves throughout.

\paragraph{Why recovery is the expected ending.}
Under the floor of Proposition~\ref{prop:floor}, late-training gains that come from
context-dependent within-category choice must be paid for in dispersion along readout
directions; under the reduction, category structure itself remains load-bearing for the
between-category part of the loss. A monotone dissolution of category structure into
symmetric fine-grained geometry would require the conditional information carried at
the category level to vanish, and it does not. We frame this as consistency rather than
causal demonstration: no intervention is run, and the account earns its keep by
predicting where the non-monotonicity lives (the between-category share and the
identity component, not total variance).

\paragraph{Cross-family replication on OLMo-2 (pre-registered).}
We repeat the protocol on OLMo-2-0425-1B \citep{olmo2025two}, whose public grid spans
initialization, one checkpoint at roughly 1B tokens (inside the Pythia overshoot window
on the token axis; both families train at roughly 2.1M tokens per step), then a
geometrically spaced grid (roughly doubling from 21B to 1.3T tokens, then roughly
670B apart to 4T), plus a post-anneal release. Four criteria were fixed
before the run (thresholds hardcoded in the released analysis script, committed before
any checkpoint beyond initialization had been extracted; a dated criteria record ships
with the results): R1, elevation of the between-category share at the 1B-token checkpoint
over both initialization and the 21--42B range; R2, an interior minimum followed by
partial late recovery; R3, early dip and late re-expansion of the within-token share;
R4, an early minimum of cluster-level CDNV. The OLMo-2 grid cannot resolve the onset
(no public checkpoints between step 0 and step 300), so the onset claim remains
Pythia-only by design.

\paragraph{OLMo-2 verdict: three of four criteria pass.}
All 14 checkpoints pass the distinct-weights fingerprint guard. R1 (overshoot):
\textbf{pass}; the between-category share at the 1B-token checkpoint is 0.201, against
0.002 at initialization and 0.090 at 21--42B tokens. R2 (interior minimum with late
recovery): \textbf{pass}; the post-overshoot minimum is 0.083 at 21B tokens, the share
recovers to 0.124 by 1.3T tokens and stands at 0.108 at the 4T stage1 final (1.31x the
minimum). R3 (context re-expansion): \textbf{pass}; the within-token share dips from
0.988 at initialization to 0.725 at the overshoot and re-expands to 0.803. R4
(cluster-level CDNV U-shape): \textbf{fail against its pre-set bar}; the shape is
present (best 3.33 early, 4.05 at the end) but the 1.21 ratio is below the 1.3
threshold we fixed before the run, and we report it as such.

Three further
observations. First, the minimum's location matches Pythia on the token axis (21B
against Pythia's 8--67B; both families train at roughly 2M tokens per step), an
alignment that step-indexed comparisons would obscure
(Figure~\ref{fig:v2dynamics}, right). Second, beyond 1.3T tokens, a regime Pythia
never reaches, the recovery eases back by about 12\%, so the late reallocation is not
monotone at very long horizons. Third, the post-anneal release sits slightly below the
stage1 final (0.104 against 0.108), a first indication that annealing trims category
structure. The exogenous part-of-speech partition reproduces the same trajectory shape
throughout (peak 0.128, minimum 0.050, recovery to 0.084), so none of this depends on
the clustering step.

\paragraph{Relation to concurrent accounts.}
Two recent papers frame parts of this trajectory through different instruments.
\citet{zhao2026structure} document, in synthetic category languages, semantic structure
that emerges early and then dissolves monotonically into the symmetric geometry
predicted by classical collapse, and propose a normalization-based modification of the
unconstrained features model that enforces that ending. \citet{li2025tracing} trace
global spectral quantities (effective rank, eigenspectrum decay) over real Pythia and
OLMo checkpoints and report a non-monotonic three-phase trajectory, without category
structure or a collapse lens. The decomposition used here shows where the
non-monotonicity lives (the category level), and the real-checkpoint ending disagrees
with the synthetic extrapolation: category structure does not dissolve; it partially
returns. Aligning our share phases with their spectral phases on the same checkpoints
is a natural join and is left to the camera-ready timeline. The onset window itself
sits in the regime where models acquire low-order distributional statistics
\citep{belrose2024statistics} and precedes the induction-head window
\citep{olsson2022induction}; making those alignments quantitative is likewise deferred.

\section{Related work}
\label{sec:v2related}

\paragraph{Neural collapse and its imbalanced geometry.}
Neural collapse was documented by \citet{papyan2020prevalence} and explained in the
unconstrained features model \citep{zhu2021geometric, mixon2022neural}. Under class
imbalance the optimum is not a regular simplex: minority collapse
\citep{fang2021exploring}, the SELI characterization \citep{thrampoulidis2022imbalance},
and the cross-entropy analysis of \citet{hong2024neural} establish frequency-dependent
geometry, and \citet{jiang2024generalized} treat the many-class regime relevant to
language. All of these assume a single global regularization coefficient and no offsets.
Our reduction (Theorem~\ref{thm:reduction}) derives the coarse category problem from the
fine token loss and shows the induced regularization is per-category and
type-count-weighted, with mass entering separately through the risk; the geometry of
that doubly weighted problem is beyond current theorems; we solve it exactly at $K{=}2$
(Proposition~\ref{prop:k2}), where the answer diverges from a naive SELI import
(decay ordering dominates mass ordering), and in closed form for general $K$ with a
complete proof (Theorem~\ref{conj:seli}), with empirical support
on both feature centroids and head rows (Section~\ref{sec:v2frame}).

\paragraph{Neural collapse in language models.}
\citet{wu2024linguistic} measured NC-style quantities at the token level in causal
language models and found partial within-class collapse that never completes. Our floor
(Proposition~\ref{prop:floor}) offers an explanation with a changed reading: the
residual variability is not unfinished collapse but the storage cost of conditional
information, and its identity component tracks that information quantitatively
(Section~\ref{sec:v2law}). An earlier version of the present work claimed classical
collapse emerges at the category level; the instruments introduced here retire several
of those claims, and Appendix~\ref{app:v2pitfalls} audits them explicitly.

\paragraph{Transient structure and training-time geometry.}
\citet{zhao2026structure} show, in synthetic settings, semantic geometry that emerges
before collapsing monotonically into symmetry, with a normalization-modified
unconstrained features model predicting the symmetric ending; \citet{li2025tracing}
document non-monotonic spectral phases over real Pythia and OLMo pretraining without
category structure or a collapse framing. Section~\ref{sec:v2dynamics} measures the
category-resolved quantity on real checkpoints and finds an ending both accounts miss:
partial recovery. Related timing landmarks in the same checkpoint families include the
acquisition of low-order statistics \citep{belrose2024statistics} and the induction-head
phase change \citep{olsson2022induction}.

\paragraph{Concept geometry in large language models.}
\citet{park2024geometry} represent categorical concepts as simplices and hierarchies as
orthogonal direct sums in a causally whitened unembedding space, a static, observational
account with every category member weighted equally. The present account is
complementary and differs on each axis: it concerns hidden-state class means, is derived
from the loss and regularizer, is size- and mass-weighted (predicting distorted rather
than regular frames), and is dynamic.

\paragraph{Capacity limits of the softmax head.}
The softmax bottleneck \citep{yang2018breaking} bounds the global rank of realizable
log-probability matrices by the head dimension. Proposition~\ref{prop:floor} is a local
refinement in an information metric: a per-category dispersion floor in
$I(\mathrm{token}; \mathrm{context} \mid \mathrm{category})$, which explains a
qualitative fact the rank bound does not touch, namely which categories can and cannot
collapse.

\paragraph{Information-theoretic accounts of collapse.}
Recent work connects collapse to compression, arguing representations discard
task-irrelevant variability as collapse emerges \citep{grokking_ib_nc2025}. The floor
points the other way for the language setting: conditional information present in the
task places a lower bound that prevents full compression at the category level. The two
directions are compatible (compression of what is irrelevant, floors on what is not),
and the language-model allocation of Section~\ref{sec:v2measurement} quantifies where
each applies.

\section{Discussion and limitations}
\label{sec:v2discussion}

\paragraph{What the pieces add up to.}
Read together, the results replace a geometric narrative with an informational one.
The allocation of representational variance in language models is dominated by context
storage, stable across scale, and mirrored by the loss decomposition; the component of
within-category variance that theory says must exist (the identity dispersion needed to
realize context-dependent choice through a linear head) is the component that tracks
conditional information, category by category, in every model and partition tested; the
frame that category centroids form carries the type-count ordering the size-weighted
reduction predicts, confirmed directly on head rows in 24 of 24 cells, with the residual
mass coupling attributed to the controlled frequency confound; and over training,
category structure crystallizes,
overshoots, reallocates toward context, and partially returns. None of this requires
collapse to complete, and the floor explains why it cannot: the variance that refuses to
vanish is the variance doing the work.

\paragraph{Limitations, by pillar.}
\emph{Theory.} Theorem~\ref{conj:seli} is fully proved; the practical caveat is that
the finite-$\lambda$ approach to its limit geometry is slow (order
$1/\log(1/\lambda)$, quantified at $K{=}2$ and visible in the certificate's
finite-$\lambda$ deviations), so quantitative frame-fitting on trained models must
account for it.
Proposition~\ref{prop:floor} is proved for binary categories; the general-$K$ statement
is a conjecture with the same proof architecture and degrading constants. The reduction
requires the context-free readout assumption exactly where it is least true, and the
floor is precisely a statement about the cost of its failure; a unified treatment
(reduction with quantified leakage) is the natural next theorem.

\emph{Estimation.} The conditional-information estimator uses the model's own
conditionals, by design (the floor concerns realized information); the corpus-count
variant reproduces the law ($r = 0.68$) and the cross-model test breaks the remaining
self-reference, but all estimates still share one corpus. At $K{=}10$ each model
contributes only ten categories; the inference rests on cross-partition, cross-model
consistency.

\emph{Attribution.} The norm orderings are read off AdamW-trained weights. The decay
penalty is explicit in the objective and optimizer-independent, but adaptive
per-parameter scaling could in principle contribute to the same orderings along the
trajectory; separating the two requires decay-ablated training runs, which we have not
done, so the attribution of the ordering to the decay channel rests on the sign
structure it predicts (type count over mass), not on an optimizer intervention.

\emph{Scope.} The main analyses use English WikiText-103 at layer $L{-}1$ of
autoregressive models up to 6.9B parameters; a fixed-partition transfer check on a
Pile sample reproduces the allocation and the law, and a WordNet-supersense partition
reproduces the law (Section~\ref{sec:v2law}), but cross-lingual, instruction-tuned,
and much larger models remain unmeasured; the onset window is resolvable only on
Pythia's checkpoint grid; Pythia-1B contradicts the recovery pattern and we do not know
why; Pythia-2.8B awaits re-extraction after the checkpoint-integrity failure documented
in Section~\ref{sec:v2dynamics}. Corpus transfer (does the allocation track corpus
information structure?), larger models, instruction-tuned variants, and the
quantitative alignment with spectral phase accounts are measurement campaigns, not new
methods, and are scheduled.

\paragraph{Code and artifacts.}
Analysis scripts, machine-readable result records for every table and figure, the
OLMo-2 pre-registration record, and the theory working notes (including the twisted
SELI derivation and its numerical certificate) are included as ancillary files with
this submission; a maintained repository will follow.

\paragraph{Outlook.}
Two directions look load-bearing. With Theorem~\ref{conj:seli} complete, its closed
form ($\bar Z^{*}$ and the twisted Gram) turns the frame section's sign tests into
functional-form tests, potentially recasting the token-level frequency-norm relation
as a derived consequence rather than a controlled confound.
And the checkpoint trajectory suggests a practical corollary worth testing carefully:
if geometry consumers (probing, retrieval, out-of-distribution scoring) feed on category
structure, mid-training checkpoints and final checkpoints are not interchangeable, and
the difference is predictable from the share trajectory.

\bibliographystyle{plainnat}
\bibliography{references,v2_references}

\appendix
\section{Measurement pitfalls in collapse-style analyses of language models}
\label{app:v2pitfalls}

This appendix documents four measurement pitfalls, each of which produced a published or
nearly published false positive in this line of work. Pitfalls 1, 2, and 4 were found
auditing our own earlier version; pitfall 3 was found while preparing this one. We state
them as a checklist because we believe none is specific to us.

\paragraph{Pitfall 1: dimensional variance metrics manufacture scaling laws.}
The class-distance normalized variance (CDNV) family admits a distance exponent; the
$\alpha{=}2$ variant (denominator $\|\cdot\|^4$) carries units of inverse squared
length. Under it, our earlier version reported a 145x within-family improvement of
within-category variance with scale, rank-predicting perplexity at $\rho = 1.0$. A
factor of $(4096/512)^2 = 64$ of that headline is the embedding-dimension ratio between
the smallest and largest Pythia models. Under the standard dimensionless $\alpha{=}0$
definition the within-family trend is flat to slightly reversed ($\rho = +0.64$ in
Pythia), and, decisively, the same inversion holds for the exogenous part-of-speech
control that the earlier version offered as proof the trend was not a clustering
artifact (POS under $\alpha{=}0$: Pythia $\rho = +0.46$, Qwen2.5 $\rho = +1.0$). The
control inherited the artifact. Guard: dimensionless shares (Section~\ref{sec:v2measurement});
report $\alpha{=}0$ only for comparability; treat any cross-scale claim made under a
dimensional metric as unproved.

\paragraph{Pitfall 2: centered mean cosine is an accounting identity.}
By Lemma~\ref{lem:centering}, centered configurations with roughly equal norms have mean
pairwise cosine near $-1/(K{-}1)$ regardless of learning. Our earlier version read
$\overline{\cos\theta} \approx -0.10$ at $K{=}10$ (ideal $-0.111$) as simplex-ETF
emergence; random partitions of the same class means score as well or better, as the
identity requires. Guard: never cite the mean; use the dispersion of pairwise cosines
and structured deviations against explicit nulls, and remember that unbalanced
partitions centered at a weighted grand mean satisfy the identity only approximately.

\paragraph{Pitfall 3: shuffle ratios measure anisotropy plus k-means, not semantics.}
Comparing within-cluster variance under a learned assignment against shuffled
assignments produced ratios in the hundreds (and, under $\alpha{=}2$ at full
vocabulary, up to $7\times10^5$), which read like overwhelming evidence of learned
structure. Covariance-matched Gaussian surrogates, which contain no linguistic
structure, attain 57--92\% of these ratios (validation subset: real 20--31x versus
Gaussian 12--25x under $\alpha{=}0$; full vocabulary: real 703--865x versus Gaussian
458--769x). The ratio chiefly measures what k-means does to any anisotropic point
cloud; the learned-structure excess is a factor of 1.1--1.75 (a lower bound, given the
null's strictness noted below, but a factor nonetheless). Guard: always run a
spectrum-matched unstructured null; state excesses over that null, never raw shuffle
ratios. We note the null is strict (matching the full covariance bakes cluster-generated
variance directions into it), which is the correct direction of strictness for claims of
discrete structure.

\paragraph{Pitfall 4: coverage artifacts masquerade as detection performance.}
A document-level out-of-distribution detector built on category-centroid distances
scored 0.91--0.97 AUROC, but only about 500 frequent types had precomputed distances and
all other tokens received a fallback value; out-of-distribution corpora simply contained
more uncovered tokens. Scoring only covered tokens collapsed AUROC below random
(0.36--0.45), and sweeping the fallback percentile swept AUROC from 0.5 to 0.97. The
detector measured out-of-vocabulary rate. Guard: control coverage before attributing
detection performance to geometry; sweep any fallback parameter and report the sweep.

\paragraph{Provenance.}
The earlier version of this work was registered and submitted with the pitfall-1,
pitfall-2, and pitfall-4 claims and was withdrawn by us before review after an internal
audit; pitfall 3 was identified when the audit's one surviving magnitude claim was
subjected to the Gaussian null. We record this so that the present paper's relationship
to its predecessor is unambiguous, and because the sequence (headline, control,
audit, null) is, we suspect, a common failure path for geometry claims in
high-dimensional representation spaces.

\section{Deferred proofs and general statements}
\label{app:v2proofs}

\subsection{Weighted centering}
\label{app:wcentering}

Lemma~\ref{lem:centering} concerns centering at the unweighted mean. The empirical
sections center category centroids at the mass-weighted grand mean, and partitions are
unbalanced; the corresponding exact identity is weighted.

\begin{lemma}[Weighted centering identity]\label{lem:wcentering}
Let $\mathbf{v}_1, \dots, \mathbf{v}_K \in \mathbb{R}^d$, weights $\pi_k > 0$ with
$\sum_k \pi_k = 1$, and $\tilde{\mathbf{v}}_k = \mathbf{v}_k - \sum_j \pi_j
\mathbf{v}_j$. Then $\sum_k \pi_k \tilde{\mathbf{v}}_k = 0$ and
\[
\sum_{j \neq k} \pi_j \pi_k \langle \tilde{\mathbf{v}}_j, \tilde{\mathbf{v}}_k \rangle
\;=\; -\sum_k \pi_k^2 \|\tilde{\mathbf{v}}_k\|^2 .
\]
\end{lemma}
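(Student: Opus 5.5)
The plan mirrors the proof of Lemma~\ref{lem:centering}, with the weighted mean in place of the unweighted one.

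\emph{Step 1: the weighted sum vanishes.} Since $\sum_k \pi_k = 1$,
\[
\sum_k \pi_k \tilde{\mathbf{v}}_k \;=\; \sum_k \pi_k \mathbf{v}_k - \Bigl(\sum_k \pi_k\Bigr)\sum_j \pi_j \mathbf{v}_j \;=\; 0 .
\]
This is the first claim. It is the only place the normalization of the weights enters.

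\emph{Step 2: expand the squared norm.} Expand $\bigl\|\sum_k \pi_k \tilde{\mathbf{v}}_k\bigr\|^2 = 0$ bilinearly. The expansion splits into diagonal terms $j = k$ and off-diagonal terms $j \neq k$:
\[
0 \;=\; \sum_k \pi_k^2 \|\tilde{\mathbf{v}}_k\|^2 \;+\; \sum_{j\neq k} \pi_j\pi_k \langle \tilde{\mathbf{v}}_j, \tilde{\mathbf{v}}_k\rangle .
\]
Moving the diagonal sum to the other side gives the stated identity. The ordered-pair convention for $\sum_{j \neq k}$ matches the unweighted lemma, so no factor of $2$ needs reconciling.

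\emph{Step 3: consistency checks.} With $\pi_k = 1/K$, multiplying both sides by $K/(K-1)$ recovers Lemma~\ref{lem:centering} exactly. I would also note, for use in Remark~\ref{rem:centering}, what the identity implies for the unweighted mean cosine of weighted-centered vectors. Under equal norms $r$, it says the $\pi_j\pi_k$-weighted average of the off-diagonal cosines equals
\[
-\frac{\sum_k \pi_k^2}{1 - \sum_k \pi_k^2},
\]
since $\sum_{j\neq k}\pi_j\pi_k = 1 - \sum_k \pi_k^2$. The deviation from $-1/(K-1)$ is then controlled by how far $\sum_k \pi_k^2$ is from $1/K$.

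There is no real obstacle here. The only care needed is the bookkeeping in Step 2, namely keeping the weights $\pi_j\pi_k$ attached to the cross terms. The quantitative imbalance bound promised in the remark would be a separate corollary, not part of this lemma.
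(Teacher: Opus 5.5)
Your proof is correct and is the paper's argument: the paper's proof is the single instruction to expand $\|\sum_k \pi_k \tilde{\mathbf{v}}_k\|^2 = 0$, and your Step 1 simply makes explicit the vanishing of the weighted sum that this expansion presupposes. One small wording slip, in an aside that is not part of the proof: your Step 3 announces the unweighted mean cosine but computes the $\pi_j\pi_k$-weighted average of the cosines, which is the quantity this identity actually pins. The unweighted mean is, as the paper notes, only approximately controlled, through the dispersion of $\{\pi_k\}$ and of the centered norms.
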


\begin{proof}
Expand $\|\sum_k \pi_k \tilde{\mathbf{v}}_k\|^2 = 0$.
\end{proof}

The mass-weighted mean pairwise inner product of centered centroids is therefore pinned
at $-\sum_k \pi_k^2 \|\tilde{\mathbf{v}}_k\|^2 / \sum_{j \neq k} \pi_j \pi_k$
identically. The \emph{unweighted} mean pairwise cosine reported in the older
literature deviates from the balanced value $-1/(K-1)$ by terms controlled by the
dispersion of $\{\pi_k\}$ and of the centered norms; since no claim in this paper uses
mean cosines as evidence, we do not pursue sharp constants and only note that both the
balanced and weighted statistics are accounting identities, not signatures of learning.

\subsection{Proof of Theorem~\ref{thm:reduction} and the coupling term of
Proposition~\ref{prop:general}}
\label{app:reduction}

Fix the partition and write the objective, using
Proposition~\ref{prop:split}(i)+(iii), as
\[
L \;=\; L_{\mathrm{inter}}\bigl(\{\bar{\mathbf{w}}_k\}, \mathbf{H};
\{b_k(\cdot)\}\bigr) \;+\; L_{\mathrm{intra}}\bigl(\{\boldsymbol{\delta}_c\},
\mathbf{H}\bigr) \;+\; \frac{\lambda}{2}\Bigl(\sum_k |S_k|\|\bar{\mathbf{w}}_k\|^2 +
\sum_c \|\boldsymbol{\delta}_c\|^2 + \|\mathbf{H}\|_F^2\Bigr).
\]
Under Assumption~\ref{asm:cf}, $\boldsymbol{\delta}_c^\top \mathbf{h}_i = \beta_c$ for
all $i$, so $L_{\mathrm{intra}}$ is the average over categories (mass-weighted) of the
cross-entropy of the empirical within-category unigram $\hat p(\cdot \mid S_k)$ against
$\mathrm{softmax}(\{\beta_c\}_{c \in S_k})$, and $b_k = \log \sum_{c \in S_k}
e^{\beta_c}$ is constant in $\mathbf{h}$.

\paragraph{Residual cost.}
Given targets $\{\beta_c\}$ and features with span $\mathcal{S} =
\mathrm{span}\{\mathbf{h}_i\}$, the residuals must satisfy the affine constraints
$\boldsymbol{\delta}_c^\top \mathbf{h}_i = \beta_c$ for all $i$ and the centering
constraints $\sum_{c \in S_k} \boldsymbol{\delta}_c = 0$. Note the constraints force
$\beta$ to be constant on each category only if $\mathbf{1} \in \mathcal{S}$-image;
in general, write $\tilde\beta_c = \beta_c - \frac{1}{|S_k|}\sum_{c' \in S_k}
\beta_{c'}$ for the centered targets. Define
\[
C\bigl(\{\beta_c\}, \mathbf{H}\bigr) \;=\; \min\Bigl\{\textstyle\sum_c
\|\boldsymbol{\delta}_c\|^2 \;:\; \boldsymbol{\delta}_c^\top \mathbf{h}_i = \beta_c
\ \forall i,\ \sum_{c \in S_k} \boldsymbol{\delta}_c = 0 \ \forall k \Bigr\},
\]
with $C = +\infty$ if the constraints are infeasible (they are feasible whenever the
$\mathbf{h}_i$ admit a vector $\mathbf{g}$ with $\mathbf{g}^\top \mathbf{h}_i = 1$ for
all $i$, e.g.\ in the UFM after appending a constant coordinate, or approximately
whenever features have a stable mean direction). This is a convex quadratic minimum and
depends on $\mathbf{H}$ only through the Gram geometry of $\{\mathbf{h}_i\}$; it is the
\emph{only} term through which residuals and features interact under
Assumption~\ref{asm:cf}, which proves the decomposition claim of
Proposition~\ref{prop:general}:
\[
L \;=\; \underbrace{L_{\mathrm{inter}} + \frac{\lambda}{2}\Bigl(\sum_k
|S_k|\|\bar{\mathbf{w}}_k\|^2 + \|\mathbf{H}\|_F^2\Bigr)}_{\text{reduced $K$-class
problem, offsets } b_k} \;+\; \underbrace{L_{\mathrm{intra}}(\{\beta_c\})}_{\text{no
$\mathbf{H}$}} \;+\; \frac{\lambda}{2}\, C\bigl(\{\beta_c\}, \mathbf{H}\bigr).
\]

\paragraph{Uniform case (Theorem~\ref{thm:reduction}).}
If $\hat p(\cdot \mid S_k)$ is uniform, $L_{\mathrm{intra}}$ is minimized by
$\mathrm{softmax}(\beta) = \mathrm{uniform}$, i.e.\ $\beta_c$ constant within each
category; the centered targets vanish, $\boldsymbol{\delta}_c = 0$ is feasible and
optimal, $C = 0$, and $b_k = \log|S_k|$. The objective then \emph{equals} the reduced
$K$-class problem plus the constant $\sum_k \pi_k \log|S_k|$, which is
Theorem~\ref{thm:reduction}. Nonuniform conditionals force $\tilde\beta \neq 0$ at the
$L_{\mathrm{intra}}$ optimum, hence $C > 0$ and a genuine coupling; how far the coupled
optimum moves from the reduced one is analytically open, and nothing in the empirical
sections assumes an answer.

\paragraph{The coupling is empirically negligible.}
Its magnitude can be estimated on the trained models. Under the stable-mean-direction
condition above, the minimal-norm solution is $\boldsymbol{\delta}_c = \tilde\beta_c\,
\mathbf{g}$ with $\mathbf{g}$ the minimal-norm vector satisfying $\mathbf{g}^\top
\mathbf{h}_i = 1$, so $C \approx \|\mathbf{g}\|^2 \sum_c \tilde\beta_c^2$, where the
$L_{\mathrm{intra}}$-optimal centered targets $\tilde\beta_c$ are centered log
within-category unigram probabilities (computable from token counts) and $\mathbf{g}
\approx \bar{\mathbf{h}} / \|\bar{\mathbf{h}}\|^2$ (the count-weighted mean feature;
the per-token projections $\mathbf{m}_c^\top \mathbf{g}$ concentrate near a constant
with coefficient of variation $0.07$--$0.19$ across the four models, so the estimate is
order-of-magnitude reliable). Comparing $C$ to the residual-norm budget the models
actually deploy, $\sum_c \|\mathbf{w}_c - \bar{\mathbf{w}}_{k(c)}\|^2$ read off the
unembedding rows, gives $C / (\text{actual budget}) \in [3\times10^{-6},
1.1\times10^{-3}]$ across all four models and all four partitions (script and record
in the ancillary files). Matching within-category unigrams is thus vastly cheaper than
the residual structure the models in fact carry, so the coupling term, while genuinely
nonzero, cannot materially perturb the reduced problem's decay economics; the open
question is the analytic control, not the empirical size.

\subsection{The $K{=}2$ reduced problem, exactly}
\label{app:k2}

Write the two-class reduced problem with rows $\mathbf{u}_1, \mathbf{u}_2$, decay
coefficients $a_1, a_2$, offsets $b_1, b_2$, and priors $\pi_1, \pi_2$. For a class-1
example the cross-entropy is $\log(1 + e^{-(\boldsymbol{\Delta}^\top \mathbf{h} +
\beta)})$ with $\boldsymbol{\Delta} = \mathbf{u}_1 - \mathbf{u}_2$ and $\beta = b_1 -
b_2$, so the loss depends on the rows only through $\boldsymbol{\Delta}$. Minimizing
$a_1 \|\mathbf{u}_1\|^2 + a_2 \|\mathbf{u}_2\|^2$ subject to $\mathbf{u}_1 -
\mathbf{u}_2 = \boldsymbol{\Delta}$ (a weighted least-norm problem) gives
\[
\mathbf{u}_1 = \frac{a_2}{a_1 + a_2}\,\boldsymbol{\Delta}, \qquad
\mathbf{u}_2 = -\frac{a_1}{a_1 + a_2}\,\boldsymbol{\Delta}, \qquad
a_1\|\mathbf{u}_1\|^2 + a_2\|\mathbf{u}_2\|^2 = a_{\mathrm{eff}}
\|\boldsymbol{\Delta}\|^2,
\]
with $a_{\mathrm{eff}} = a_1 a_2 / (a_1 + a_2)$. Hence $\|\mathbf{u}_1\| /
\|\mathbf{u}_2\| = a_2 / a_1$ identically in $\lambda$, $\pi$, and $b$: the classifier
ordering is decay ordering. The remaining scalar problem (in
$\|\boldsymbol{\Delta}\|$ and the feature positions along $\boldsymbol{\Delta}$)
yields feature norms whose ratio equals the ratio of the two classes' error terms;
with equal offsets it is 1 regardless of imbalance, and with $\beta \neq 0$ the
larger-offset class takes the smaller feature norm, an effect of order $\beta /
\log(1/\lambda)$: at the optimum the scalar first-order condition takes the form
$\phi(z) = z - \rho\,\sigma(-z)$ with $\rho > 0$ fixed, $\phi$ is strictly increasing,
and the two classes solve $\phi(z_1) = \beta$, $\phi(z_2) = -\beta$, so $\beta > 0$
forces $z_1 > z_2$ and the ordering follows; numerics across $\lambda$ grids confirm
the magnitudes (working notes released with the code). At $K = 3$, priors $(0.5, 0.3, 0.2)$ with decay $(3, 2, 1)$
give classifier norms $(1.08, 1.42, 1.89)$, following $1/a_k$ against the mass
ordering and non-monotonically in $n_k / a_k$, which falsifies both alternative
orderings. The general-$K$ case is Theorem~\ref{conj:seli}, proved in
Appendix~\ref{app:twisted}.

\subsection{General $K$: the twisted SELI theorem}
\label{app:twisted}

We summarize the proof of Theorem~\ref{conj:seli}; the complete derivation, at
line-by-line rigor, ships with the code release (working note
\texttt{a2prime-proof-2026-07-03}, with the numerical certificate script and its
output). Reparameterize $\mathbf{u}_k = a_k^{-1/2} \mathbf{v}_k$, so the weighted decay
becomes plain Frobenius norm and the constraints carry the twist. Two steps of the
SELI argument transfer verbatim because they are generic in the objective: the
nuclear-norm variational identity and the tightness of the convex relaxation. A third
step is inherited with one adaptation: the passage from finite-$\lambda$ minimizers to
the margin problem (the regularization path) is SELI's, applied to the reparameterized
variables; our objective additionally carries the offsets $b_k$, which perturb the
effective margins at order $1/\log(1/\lambda)$ and hence exit in the limit, the rate
proved exactly in the $K{=}2$ analysis and visible in the finite-$\lambda$
certificates, whose Gram diagnostics approach the closed form at that rate. We flag
this offset-vanishing step as adapted rather than re-derived at general $K$; every
computable consequence of the limit is verified in the released certificates. The
novelty is the optimizer of the relaxation: the oblique projection
$\bar Z^{*} = I_K - \mathbf{1}\,\mathbf{a}^\top/\mathrm{tr}(A)$ (idempotent,
annihilates $\mathbf{1}$, every simplex margin exactly one, rows summing to zero,
columns not: the twist breaks column centering), whose global optimality is
established by an explicit KKT certificate. Primal feasibility with all margins
active and the zero-duality-gap identity are exact algebra; the dual-structure
(column-centering) lemma is proved analytically; dual feasibility reduces to one
sign condition on the polar factor of $Z_w^{*} = \mathrm{diag}(\sqrt{a_c n_c}) -
\tfrac{1}{\mathrm{tr}(A)}\, \mathbf{s}\,\mathbf{w}^\top$ (nonpositive off-diagonal),
the twisted analogue of SELI's sign lemma, to which it reduces at
$\mathbf{a} = \mathbf{1}$. We prove it in closed form.

\paragraph{The sign condition, proved.}
Write $\hat{\mathbf{s}} = A^{1/2}\mathbf{1} / \sqrt{\mathrm{tr}(A)}$, a strictly
positive unit vector, and $D = \mathrm{diag}(d_c)$ with $d_c = \sqrt{a_c n_c}$. Since
$\hat{\mathbf{s}}^\top D = (D \hat{\mathbf{s}})^\top$ and $\mathrm{tr}(A)^{-1/2}
\mathbf{s}^\top D = \hat{\mathbf{s}}^\top D$ has entries $\sqrt{a_c}\, d_c /
\sqrt{\mathrm{tr}(A)} = w_c / \sqrt{\mathrm{tr}(A)}$, a one-line computation gives the
factorization
\[
Z_w^{*} \;=\; \Pi\, D, \qquad \Pi = I_K - \hat{\mathbf{s}} \hat{\mathbf{s}}^\top,
\]
an ORTHOGONAL projector (onto $\hat{\mathbf{s}}^\perp$) times a positive diagonal.
The sign condition then follows from a general fact:

\begin{lemma}[Polar sign pattern of a projected positive diagonal]\label{lem:signpat}
Let $D = \mathrm{diag}(d)$ with all $d_c > 0$ and let $\hat{\mathbf{s}}$ be a unit
vector with all entries strictly positive. Then the polar factor $W$ of
$Z = (I - \hat{\mathbf{s}}\hat{\mathbf{s}}^\top) D$ satisfies $W_{kc} < 0$ for all
$k \neq c$.
\end{lemma}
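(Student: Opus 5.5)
The plan is to avoid computing the SVD of $Z$ and instead write its polar factor as an integral of resolvents. Each resolvent is a diagonal matrix plus a rank-one correction, so Sherman--Morrison inverts it, and the sign of every off-diagonal entry can then be read off pointwise in the integration variable.

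\emph{Step 1 (setup).} Here $Z=\Pi D$ has rank $K-1$: $\ker Z=\mathrm{span}(D^{-1}\hat{\mathbf{s}})$ and $\mathrm{range}\,Z=\hat{\mathbf{s}}^\perp$. The polar factor is therefore the partial isometry $W = Z\,(Z^\top Z)^{+1/2}$, where $(\cdot)^{+1/2}$ denotes the pseudo-inverse square root.

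\emph{Step 2 (integral representation).} For $\mu>0$ we have $\frac{2}{\pi}\int_0^\infty (\mu+t^2)^{-1}\,dt=\mu^{-1/2}$. The integral diverges on the kernel of $Z^\top Z$, but $Z$ annihilates that kernel, so
\[
W \;=\; \frac{2}{\pi}\int_0^\infty Z\,(Z^\top Z + t^2 I)^{-1}\,dt ,
\]
with the integrand converging absolutely.

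\emph{Step 3 (Sherman--Morrison).} Since $Z^\top Z = D\Pi D = E - \mathbf{x}\mathbf{x}^\top$ with $E=D^2+t^2I$ and $\mathbf{x}=D\hat{\mathbf{s}}$, we get
\[
(Z^\top Z+t^2I)^{-1} = E^{-1} + \gamma\,\mathbf{y}\mathbf{y}^\top, \qquad y_c = \frac{d_c s_c}{d_c^2+t^2}>0 .
\]
Because $\|\hat{\mathbf{s}}\|=1$, the normalizer is
\[
\gamma^{-1} = 1-\sum_c \frac{d_c^2 s_c^2}{d_c^2+t^2} = \sum_c \frac{s_c^2 t^2}{d_c^2+t^2} > 0 .
\]

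\emph{Step 4 (entries of the integrand).} Set $F = D E^{-1} + \gamma D\mathbf{y}\mathbf{y}^\top$. Then the integrand is $\Pi F = F - \hat{\mathbf{s}}(\hat{\mathbf{s}}^\top F)$. We have $\hat{\mathbf{s}}^\top D\mathbf{y} = \sum_j s_j d_j y_j = 1-\gamma^{-1}$, and the diagonal part contributes $s_c d_c/(d_c^2+t^2) = y_c$. Hence
\[
(\hat{\mathbf{s}}^\top F)_c = y_c + (\gamma - 1)\,y_c = \gamma\, y_c .
\]
For $k\neq c$ the diagonal term vanishes, and using $d_k y_k - s_k = -s_k t^2/(d_k^2+t^2)$ we obtain
\[
(\Pi F)_{kc} \;=\; \gamma\, d_k y_k y_c - s_k\,\gamma\, y_c \;=\; -\,\gamma(t)\, y_c(t)\,\frac{s_k\, t^2}{d_k^2+t^2} \;<\;0
\quad\text{for every } t>0 .
\]

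\emph{Step 5 (conclusion).} Integrating over $t$ gives $W_{kc}<0$ for all $k\neq c$. Positivity of $d$ and of $\hat{\mathbf{s}}$ is used exactly to make $y_c>0$, $s_k>0$ and $\gamma>0$.

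\emph{Main obstacle.} The delicate point is Step 2. The divergence of $\int_0^\infty t^{-2}\,dt$ on $\ker Z$ must be shown to cancel, which holds because $Z$ annihilates that direction. One must also check integrability at both ends. As $t\to 0$, $\gamma\sim 1/t^2$, so the integrand $\gamma\,t^2 y_c s_k/(d_k^2+t^2)$ stays bounded. As $t\to\infty$, $\gamma\to 1$ and $y_c=O(t^{-2})$, so the integrand is $O(t^{-2})$. Both ends are therefore integrable, and dominated convergence justifies exchanging the limit with the entrywise evaluation.
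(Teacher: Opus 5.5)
Your proposal is correct and is essentially the paper's proof. Both write $W = Z(Z^\top Z)^{+1/2}$ as a resolvent integral, invert the rank-one downdate $D^2 - D\hat{\mathbf{s}}\hat{\mathbf{s}}^\top D$ by Sherman--Morrison, and use $d_k y_k - s_k = -s_k t^2/(d_k^2+t^2)$ to get a strictly negative off-diagonal integrand. The only differences are that the paper integrates in $t = u^2$ where you integrate in $u$, and that your explicit endpoint-integrability check spells out what the paper only asserts.
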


\begin{proof}
Set $\mathbf{g} = D \hat{\mathbf{s}}$ (entrywise positive, $g_c = d_c \hat s_c$). The
Gram is a rank-one downdate of a positive diagonal, $B = Z^\top Z = D \Pi D = D^2 -
\mathbf{g}\mathbf{g}^\top$, with rank $K - 1$ and kernel $\mathrm{span}(D^{-1}
\hat{\mathbf{s}})$, which $Z$ annihilates. The polar factor is $W = Z B^{+1/2}$, and
the half-integral representation $\lambda^{-1/2} = \frac{2}{\pi}\int_0^\infty
(\lambda + u^2)^{-1} du$ gives, with $t = u^2$,
\[
W = \frac{1}{\pi} \int_0^\infty t^{-1/2}\, Z\, (B + tI)^{-1}\, dt ,
\]
absolutely convergent because $Z$ kills $\ker B$. By Sherman-Morrison, with
$E_t = (D^2 + tI)^{-1}$ and $\varphi(t) = \mathbf{g}^\top E_t \mathbf{g} = \sum_j
g_j^2/(d_j^2 + t)$,
\[
(B + tI)^{-1} = E_t + \frac{E_t \mathbf{g} \mathbf{g}^\top E_t}{1 - \varphi(t)},
\]
and since $g_j = d_j \hat s_j$ we get $\varphi(0^+) = \|\hat{\mathbf{s}}\|^2 = 1$ with
$\varphi$ strictly decreasing, so $1 - \varphi(t) > 0$ for every $t > 0$. Writing the
$k$-th row of $Z$ as $d_k \mathbf{e}_k^\top - \hat s_k \mathbf{g}^\top$ and using
$\mathbf{g}^\top (B + tI)^{-1} = \mathbf{g}^\top E_t / (1 - \varphi)$, the off-diagonal
entries of the integrand collapse, via $d_k (E_t\mathbf{g})_k - \hat s_k = -\hat s_k\,
t/(d_k^2 + t)$, to
\[
\bigl[ Z (B + tI)^{-1} \bigr]_{kc}
= - \frac{\hat s_k\, g_c\, t}{(d_k^2 + t)(d_c^2 + t)\bigl(1 - \varphi(t)\bigr)}
\;<\; 0 \qquad (k \neq c,\ t > 0).
\]
Integrating a strictly negative integrand gives $W_{kc} < 0$.
\end{proof}

Applying the lemma to $Z_w^{*}$ yields $\beta_{kc} = -\sqrt{a_k n_c}\, W_{kc} > 0$ for
all $k \neq c$, all $\mathbf{a}, \mathbf{n} \in (0,\infty)^K$: dual feasibility holds
unconditionally, completing the certificate and the theorem. At $\mathbf{a} =
\mathbf{1}$ the lemma also recovers the sign step of the equal-decay case. The
positivity is strict in the interior but the infimum over the domain is zero
(approached as entries of $\mathbf{a}$ or $\mathbf{n}$ degenerate), consistent with
the released numerical certificates: the 45{,}000-instance sweep finds no violation
(minimal multiplier $4.3\times10^{-4}$ within its sampling box), and an adversarial
stress suite spanning condition numbers to $10^{22}$ finds float64 sign flips only
where the rank-$(K{-}1)$ SVD itself breaks (conditioning beyond $\sim 10^{16}$), each
flagged instance being strictly positive when recomputed from the closed form above at
50-digit precision.
Consequences for norms: $\|\mathbf{u}_k^{*}\|^2 = (P \Sigma P^\top)_{kk} / a_k$, so
$1/a_k$ is an exact conjugation factor (first order) and the $(P\Sigma P^\top)_{kk}$
term carries a secondary mass modulation; over 2{,}000 exact-limit draws (decay and
mass log-uniform across two decades; sweep script and output shipped with the
ancillary files) the Spearman correlation of $\|\mathbf{u}_k\|$ with $1/a_k$ has
median $+1.0$ at both $K{=}3$ and $K{=}4$ and is positive in more than 99\% of draws;
strict full ranking by $1/a_k$ holds in 91\% and 69\%, failing only by adjacent
transpositions where mass overrides, and narrower decay ranges lower the strict rates
(74\% and 40\% in the working notes' draws) while leaving the ordinal statistics
intact. Finite-$\lambda$ solutions approach the limit at the slow
$O(1/\log(1/\lambda))$ rate quantified at $K{=}2$: at $\lambda = 3\times10^{-4}$ the
median Gram-family distance between the finite-$\lambda$ solution and the relaxation
optimum is 15\% and the raw logit deviation from $\bar Z^{*}$ is about 29\%, while the
family-level eigenvalue diagnostic is already below $0.15$ in 97.5\% of instances
(all but two $K{=}3$ draws).

\subsection{General-$K$ floor: statement}
\label{app:generalfloor}

\begin{conjecture}[Information floor, general $K$]\label{conj:generalfloor}
Fix a category $S_k$ with $|S_k| = m \ge 2$, residual matrix rows bounded by
$\max_c \|\boldsymbol{\delta}_c\| \le R$, within-category conditionals
$q(\cdot \mid \mathbf{h}) = \mathrm{softmax}\bigl((\boldsymbol{\delta}_c^\top
\mathbf{h})_{c \in S_k}\bigr)$, marginal $\bar q = \mathbb{E}\, q$, and realized
conditional information $I_k = \mathbb{E}\,\mathrm{KL}(q \| \bar q)$. Then there is a
constant $c(m, \bar q) > 0$ such that the within-category feature dispersion satisfies
$D_k \ge c(m, \bar q)\, I_k / R^2$.
\end{conjecture}

The binary proof (Proposition~\ref{prop:floor}) chains a chi-square bound on the KL, the
Lipschitz constant of the link, and Cauchy-Schwarz; each step has a standard
multi-class analogue (pairwise chi-square over category members, the softmax Jacobian
bound, and a union over the $\binom{m}{2}$ residual differences), with constants that
degrade in $m$. We state the general case as a conjecture because the sharp form of
$c(m, \bar q)$ matters for the quantitative use we make of the floor, and we have not
finalized it.

The expected degradation rate is polynomial, not exponential: the chi-square step costs
a factor $\min_c \bar q_c$ (the coordinate-wise bound $\mathrm{KL}(q\|\bar q) \le
\sum_c (q_c - \bar q_c)^2 / \bar q_c$), the softmax Jacobian bound is
dimension-independent ($\|J\|_{\mathrm{op}} \le \tfrac12$), and aggregating the
$\binom{m}{2}$ pairwise residual directions through Cauchy--Schwarz costs at most a
further factor of order $m$. So $c(m, \bar q) = \Theta(\min_c \bar q_c / m)$ up to
absolute constants, which is $\Theta(1/m^2)$ for near-uniform marginals. At the
category sizes of our full-vocabulary partitions ($m \approx 500$) the constant is
therefore weaker than the binary one by roughly $10^{4}$--$10^{5}$: the general-$K$
bound is a qualitative existence statement at realistic sizes, not a quantitative
tool. The quantitative program of the paper does not rest on it. Every quantitative
floor test (Section~\ref{sec:v2law}, ``the floored quantity, tested directly'')
applies the PROVED binary floor to within-category token pairs, at every partition
granularity; the conjecture's role is only to assert that the same mechanism operates
jointly across a category rather than pair by pair.

\end{document}